\documentclass{article}


\usepackage[preprint,nonatbib]{neurips_2022}




\usepackage[utf8]{inputenc} 
\usepackage[T1]{fontenc}    
\usepackage{hyperref}       
\usepackage{url}            
\usepackage{booktabs}       
\usepackage{amsfonts}       
\usepackage{nicefrac}       
\usepackage{microtype}      
\usepackage{xcolor}    

\usepackage{amsmath,amssymb,amsthm,commath,esint,tikz-cd,tikz, mathtools, physics}
\usetikzlibrary{shapes,arrows,positioning}
\usepackage[mathscr]{euscript}
\usepackage{graphicx}
\usepackage{float}
\usepackage{fancyhdr}
\usepackage{bm}
\usepackage{bbm}
\usepackage[ruled,vlined]{algorithm2e}

\numberwithin{equation}{section}


\usepackage{amsthm,amssymb}
\usepackage{mathtools,thmtools}
\usepackage{bm,esint}
\usepackage{color}
\usepackage{float,graphicx,subcaption}
\usepackage{cancel}
\usepackage[shortlabels]{enumitem}
\usepackage{mathrsfs}  
\usepackage{bbm}
\usepackage{euscript}
\usepackage{hyperref}
\usepackage{cleveref}
\usepackage{upgreek}

\usepackage{todonotes}


\newcommand{\size}{\mathrm{size}}

%

\newcommand{\fb}{\bm{\mathrm{e}}}
\newcommand{\fbhat}{\widehat{\bm{\mathrm{e}}}}
%


\newcommand{\Id}{\mathrm{Id}}




\newcommand{\Err}{\widehat{\mathscr{E}}}


%

\newcommand{\slot}{{\,\cdot\,}}

%

\newcommand{\T}{\mathbb{T}}
\newcommand{\R}{\mathbb{R}}

\newcommand{\N}{\mathbb{N}}

\newcommand{\Z}{\mathbb{Z}}

\newcommand{\G}{\mathcal{G}}
\renewcommand{\div}{{\mathrm{div}}}
\newcommand{\Lip}{\mathrm{Lip}}

%

\newcommand{\cA}{\mathcal{A}}
\newcommand{\cB}{\mathcal{B}}

\newcommand{\cD}{\mathcal{D}}
\newcommand{\cE}{\mathcal{E}}
\newcommand{\cF}{\mathcal{F}}
\newcommand{\cG}{\mathcal{G}}

\newcommand{\cI}{\mathcal{I}}
\newcommand{\cJ}{\mathcal{J}}
\newcommand{\cK}{\mathcal{K}}
\newcommand{\cL}{\mathcal{L}}

\newcommand{\cN}{\mathcal{N}}

\newcommand{\cP}{\mathcal{P}}

\newcommand{\cR}{\mathcal{R}}

\newcommand{\cV}{\mathcal{V}}

\newcommand{\cX}{\mathcal{X}}
\newcommand{\cY}{\mathcal{Y}}



\newcommand{\hg}{\widehat{g}}



%

%

\newcommand{\outspace}{\mathcal{Y}}

\newcommand{\chat}{{\widehat{c}}}

\renewcommand{\tr}{{{\tau}}}
\newcommand{\trunk}{{\bm{\tau}}}
\newcommand{\branch}{{\bm{\beta}}}

\newcommand{\enum}{{\upkappa}}


\renewcommand{\epsilon}{\varepsilon}

\newcommand{\Uhat}{\widehat{U}}

\newcommand{\s}{s} 

\newcommand{\bigO}{\mathcal{O}}
\newcommand{\E}[1]{{\mathbb{E}\left[ #1 \right]}} 



\newtheorem{theorem}{Theorem}[section]

\newtheorem{remark}[theorem]{Remark}
\newtheorem{definition}[theorem]{Definition}
\newtheorem{lemma}[theorem]{Lemma}

\newtheorem{corollary}[theorem]{Corollary}

\title{Variable-Input Deep Operator Networks}

%

\author{
Michael Prasthofer\thanks{Equal contribution.} \hspace{0.5cm} Tim De Ryck$^*$  \hspace{0.5cm}Siddhartha Mishra\\
Seminar for Applied Mathematics\\
ETH Zürich, Switzerland 
}


\begin{document}

\maketitle

\begin{abstract}
Existing architectures for operator learning require that the number and locations of sensors (where the input functions are evaluated) remain the same across all training and test samples, significantly restricting the range of their applicability. We address this issue by proposing a novel operator learning framework, termed Variable-Input Deep Operator Network (VIDON), which allows for random sensors whose number and locations can vary across samples. VIDON is invariant to permutations of sensor locations and is proved to be universal in approximating a class of continuous operators. We also prove that VIDON can efficiently approximate operators arising in PDEs. Numerical experiments with a diverse set of PDEs are presented to illustrate the robust performance of VIDON in learning operators. 
\end{abstract}

\section{Introduction}
Operators are mappings between infinite-dimensional spaces. They arise in a large variety of contexts in science and engineering, particularly when the underlying models are ordinary (ODEs) or partial (PDEs) differential equations. A prototypical example for operators is provided by the so-called solution or evolution operator of a time-dependent PDE, which maps an input (infinite-dimensional) function space of initial conditions to an output function space of solutions of the PDE at certain point of time. Given the ubiquity of ODEs and PDEs in applications, \emph{learning operators} from data is of great significance in science and engineering, \cite{HIG,NO} and references therein. 

As inputs and outputs of operators are infinite-dimensional (e.g. functions, infinite sequences), conventional neural networks cannot be directly deployed to learn them. Instead, a new field of \emph{operator learning} is rapidly emerging, wherein one designs novel learning architectures to approximate such operators. A popular operator learning paradigm is that of \emph{neural operators} \cite{NO}, which generalize the structure of neural networks, wherein each hidden layer consists of a \emph{non-local} affine operator, composed with a local (scalar) non-linear activation function. Reflecting the infinite-dimensional structure of the underlying learning task, the non-local affine layer amounts to integrating with respect to a kernel and choosing different kernels leads to graph kernel operators, \cite{GKO}, low-rank kernel operators \cite{NO} and multipole expansions \cite{Mpole}. Evaluating a convolution-based kernel efficiently in Fourier space (via FFT), yields the \emph{Fourier neural operator} (FNO) \cite{FNO}, which has been rigorously proved to be \emph{universal}, efficient in learning operators arising in PDEs \cite{kovachki2021universal} as well as being very successfully employed in a variety of applications in science and engineering \cite{FNO,FNO1,FNO2} and references therein. However, FNOs are restricted to operators where the discretization of the underlying domains is (or can be efficiently mapped to) a Cartesian grid, considerably limiting the range of their applicability \cite{fair}. 

An alternative framework is that of \emph{operator networks} \cite{ChenChen1995} and their deep version, \emph{DeepONets} \cite{deeponets}. This architecture is based on two different sets of neural networks, so-called \emph{trunk nets} which span the infinite-dimensional output space and \emph{branch nets} which map the (encoded) input into the coefficients of the trunk nets. DeepONets are also universal, efficiently approximate operators arising in PDEs \cite{LMK1} and are widely used in scientific computing \cite{deeponets,donet1,donet2,donet3} and references therein. In contrast to FNOs, DeepONets can handle learning operators on very general domains and boundary conditions. However, DeepONets have their own set of limitations. In particular, the (infinite-dimensional) input function to the branch net of a DeepONet has to be projected to finite dimensions by evaluating it on a finite set of \emph{sensors}, located in the underlying domain. Although these sensor locations can be \emph{randomly chosen} inside the domain \cite{deeponets,LMK1}, the number and location of these sensor points has to be invariant across all training (and test) samples. Similarly,  FNOs require input sensors to be located on a Cartesian grid for each sample.  

On the other hand, the training (and test) data for operator learning is generated either from physical measurements (observations) or computer simulations (or a combination of them). In both scenarios, it is too restrictive to expect that data is available at the same set (or number) of points across all training samples. For instance, different data sources (measurement devices) can be placed at different locations at different time periods or for different domains (experimental conditions), within the same data set. Moreover, measurements at some sensor locations could be missing due to device faults. Additionally, there will always be an intrinsic uncertainty in the exact location of measurement devices. Similarly, numerical simulations at different spatio-temporal resolutions will be combined in the same training and test data set. Hence, the lack of flexibility in sensor locations for \emph{encoding} the inputs to DeepONets and FNOs constitutes a major limitation for current operator learning frameworks (see Figure \ref{fig:1}).   

This limitation of fixed number and location of sensors points to a fundamental issue with existing operator learning frameworks as one can  view their inputs as vectors of fixed length containing function values at fixed locations, rather than functions which could be evaluated at arbitrary points of the domain. This raises questions on the very essence of operator learning with current architectures.  A related issue pertains to the notion of \emph{permutation invariance} i.e., permuting input sensor locations should not alter the output of an operator learning framework as the same underlying input function is being sampled. Thus, it is imperative to require that operator learning frameworks be permutation-invariant.

The above considerations set the stage for the current paper where we propose a novel architecture for operator learning that allows for variable (flexible) sensor locations. To this end and motivated by the permutation-invariant deep learning frameworks such as \emph{deep sets} \cite{zaheer2017deep,wagstaff2019limitations} as well as \emph{transformers} \cite{vaswani2017attention}, we propose an operator learning framework termed \emph{Variable-Input Deep Operator Networks} (VIDON) in Section \ref{sec:2}, that allows for random locations for input sensors in each sample as well as for the number and locations of sensors to vary across samples (see Figure \ref{fig:1}). We prove in Section \ref{sec:analysis} that VIDON is \emph{universal} in approximating continuous operators that map into Sobolev spaces. Moreover, we also prove that VIDON can efficiently approximate operators stemming from a variety of PDEs, by showing that the size of the underlying neural networks only grows polynomially (at worst) in the inverse of the accuracy. Finally in Section \ref{sec:4}, we illustrate VIDON in a series of numerical experiments for different PDEs, showing that it can accurately approximate the underlying operators, while presented with very different input sensor configurations. The notation and technical details for proofs and implementations are presented in the supplementary material ({\bf SM}).
\begin{figure}[ht!]
\centering
\includegraphics[width=\linewidth]{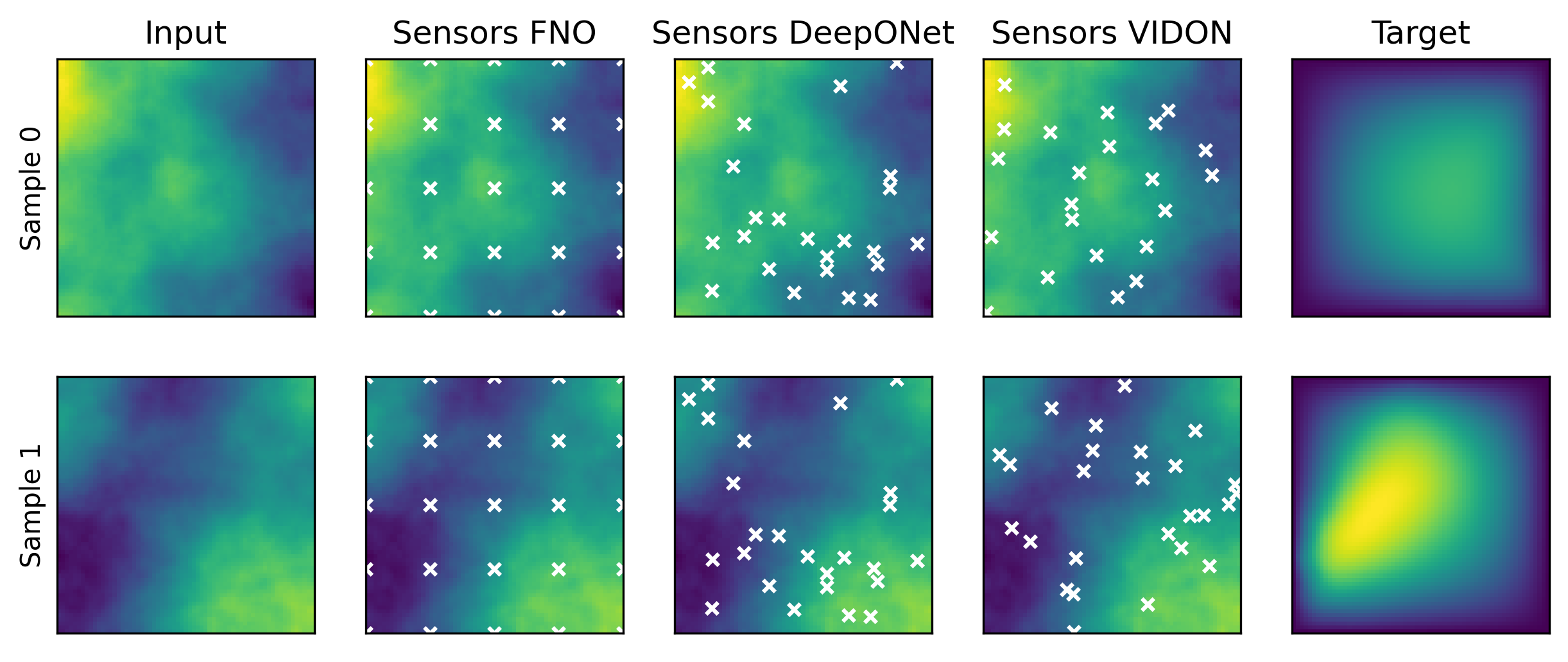}
\caption{Learning the operator mapping the permeability coefficient $a$ (Input) to the solution $u$ (Target) for Darcy flow \eqref{eq:darcy}, for two different realizations (samples) of the input. The input has to be evaluated at \emph{sensor} points. FNO requires a Cartesian grid of sensors for each sample whereas as DeepONets allow for a random cloud of sensors. However, the number and location of sensors has to be the same for all training and test samples. In contrast, VIDON \eqref{eq:tb-deeponet} allows for random sensor points, whose location and number can vary for each sample.}
\label{fig:1}
\end{figure}

\section{Variable-Input Deep Operator Networks}
\label{sec:2}
\paragraph{Setting.} For compact sets $D,U \subset \R^d$, we consider (general forms of) operators $\cG:\cX\to\cY$, where $\cX\subset L^2(D;\R^{d_v})$ and $\cY\subset L^2(U; \R^{d_u})$. In the context of time-dependent PDEs of the general abstract form, $\cL_a(u)=0$ with $u(0,\cdot)=u_0$, where $a$ is a parameter function, we will consider operators that map a parameter function $a$ to the solution $u$ i.e., $\cG:\cX\to\cY:a\mapsto u$, as well as operators that map the initial condition $u_0$ to the solution $u$ i.e., $\cG:\cX\to\cY:u_0\mapsto u$. 

\paragraph{DeepONets.} Following \cite{deeponets,LMK1}, one approximates the operators $\cG$ by starting with an \emph{encoder} $\cE:\cX\to \R^m$ that maps every input function $u$ to $m$ fixed \emph{sensors} and its sensor values $((x_j,u(x_j)))_{1 \leq j \leq m}$. Two sets of neural networks are then defined, a \emph{branch net} $\branch^*: \R^m \to \R^{p}$ and a \emph{trunk net} $\trunk :U\to\R^{p+1}$.
The branch and trunk nets are then combined to approximate the underlying non-linear operator as the \emph{DeepONet} $\cN^*$,
\begin{equation}
    \label{eq:deeponet}
\cN^*: \cE(\cX)\subset \R^m \to\outspace : \cE(u) \mapsto \tr_0(y)+\sum_{k=1}^p \beta_k^*(\cE(u)) \tr_k(y).
\end{equation}

\paragraph{Functions on sets.} In DeepONets \eqref{eq:deeponet}, the input to the branch net is a tuple of fixed length, whose entries are function values of the input function $u$ at $m$ fixed sensors. As described in the introduction, this is necessarily limiting.  To allow for variable sensor locations as well as invariance of the output to permutations of sensor locations, the input to the branch net should consist of sets of variable size, rather than tuples of fixed length. To this end,  we follow \cite{wagstaff2019limitations} to define $\mathfrak{X}^{\leq M}$ to be the set of subsets of a set $\mathfrak{X}$ containing at most $M\in\N$ elements and we denote by $\mathfrak{X}^{\cF}$ the set of finite subsets of $\mathfrak{X}$. For $M\in\N$, $u\in \cX$ and (possibly random) sensor points $x_1(u), \ldots, x_M(u)$ we can then define an encoder $\cE$ by, 
\begin{equation}\label{eq:encoder}
\begin{split}
    &\cE: \cX\times \{1, \ldots, M\}\to (\R^{d+d_v})^{\leq M}: (u, m) \to \cE_m(u), \quad \text{where},\\
    &\cE_m:\cX\to (\R^{d+d_v})^{m}: u\mapsto \{(x_j,u(x_j))\}_{j=1}^m, \quad 1\leq m\leq M.
\end{split}
\end{equation}
Given this encoding, the branch net in \eqref{eq:deeponet} needs to be a permutation-invariant function that allows sets with variable size as input, i.e., $\branch:(\R^{d+d_v})^{\leq M}\to \R^p$. Based on the result of \cite{zaheer2017deep} on permutation-invariant functions, it has been proven in \cite[Theorem 4.1]{wagstaff2019limitations} that every continuous function $f:\R^{\leq M}\to \R$ is necessarily \emph{continuously sum-decomposable via $\R^M$}, meaning that it must be of the form $f(X) = \rho\left(\sum_{x\in X}\varphi(x)\right)$, $X\in \R^{\leq M}$, where $\rho:\R^M\to \R$ and $\varphi: \R\to \R^M$ are continuous functions. This characterization constitutes the starting point of our new architecture, which consists of the following ingredients,

\paragraph{Input encoding.}

The input is a function $u\in \cX$ that is sampled at $m=m(u)$ sensor points by the encoder $\cE_m$ \eqref{eq:encoder} i.e., $\cE_{m}(u) = \{(x_j, u(x_j))\}_{j=1}^{m}$ where $\{ x_j \}_{j=1}^{m}\subset D$ are the sensor coordinates and $\{ u(x_j) \}_{j=1}^{m}\subset \mathbb{R}^{d_v}$ the corresponding sensor values. Note that the sampling procedure, both in terms of number of sensors and their locations, is allowed to be different for every input. The samples are then further encoded as follows, 
\begin{equation}\label{eq:psi}
    \Psi : \R^{d+ d_v}\to \R^{d_{enc}}: (x_j, u(x_j)) \mapsto \Psi_c\left(x_j\right) + \Psi_v\left(u(x_j)\right) =: \psi_j
\end{equation}
where $\Psi_c:\R^{d}\to \R^{d_{enc}}$ is the \emph{coordinate encoder} and $\Psi_v:\R^{d_v}\to \R^{d_{enc}}$ the \emph{value encoder}, both modeled as trainable multilayer perceptrons (MLPs). 
\paragraph{Head(s).} The well-known \emph{attention} mechanism of the transformer \cite{vaswani2017attention} architecture transforms sequential inputs into weighted values, where the (convex) weights are computed in terms of \emph{correlations} between inputs at different locations. Loosely motivated by this mechanism but seeking to keep computational complexity linear (instead of quadratic as in the case of attention of \cite{vaswani2017attention}) in inputs, we process the encoded inputs $\{\psi_j\}_{j=1}^m$ in the following manner. First for any $1 \leq \ell \leq H$, the \emph{values} of the \emph{head} $\ell$ are calculated using a single MLP $\Tilde{\nu}^{(\ell)}: \R^{d_{enc}} \to \R^p$. Next, the corresponding weights are computed as,
\begin{equation}\label{eq:omega}
     \omega^{(\ell)}: \R^{d_{enc}} \to \R: \psi_j \mapsto \frac{\exp(\Tilde{\omega}^{(\ell)}(\psi_j) / \sqrt{d_{enc}})}{\sum_{k=1}^{m} \exp(\Tilde{\omega}^{(\ell)}(\psi_k) / \sqrt{d_{enc}})}, \qquad \text{where}\quad  \Tilde{\omega}^{(\ell)}: \R^{d_{enc}} \to \R 
\end{equation}
is instantiated as an MLP. The output of a single head with index $\ell$ is then given by, 
\begin{equation}\label{eq:nu}
    \nu^{(\ell)}: \R^{d_{enc}}\to \R^p: \Psi(\cE_m(u)) = (\psi_j)_{j=1}^m \mapsto \sum_{j=1}^m \omega^{(\ell)}(\psi_j)\Tilde{\nu}^{(\ell)}(\psi_j). 
\end{equation}

Note that $\nu^{\ell}$ is permutation-invariant and well-defined for any $m\in\N$. Next, in analogy to multihead attention \cite{vaswani2017attention}, we concatenate the multiple heads $\nu^{(\ell)}$ and denote the result by $\nu = [\nu^{(1)},\:\ldots,\:\nu^{(H)}]$. 

\paragraph{Variable-Input Deep Operator Network.} Finally, we combine the outputs of the multiple heads using another MLP $\Phi : \R^{H\cdot p}\to \R^p$. The Variable-Input Deep Operator Network (VIDON) is then defined by replacing the branch net $\branch^*$ in \eqref{eq:deeponet} by $\branch := \Phi \circ \nu \circ \Psi$ i.e., for any $m\in \N$ we define,
\begin{equation}\label{eq:tb-deeponet}
    \cN : \cE(\cX) \to \cY : \cE_m(u) \mapsto \tr_0(y)+\sum_{k=1}^p \beta_k(\cE_m(u)) \tr_k(y).
\end{equation}
We measure the size of VIDON, $\size(\cN)$, by the total number of unique parameters in both the branch and trunk nets, which is independent of the number of sensors (\textbf{SM} Remark \ref{rem:size}). The structure of VIDON is illustrated and summarized in Figure \ref{fig:2}. Moreover, by construction, the new branch net $\branch$ in VIDON \eqref{eq:tb-deeponet} is a permutation-invariant function that takes finite sets of variable sizes as input. 

\begin{figure}[ht!]
\centering
\includegraphics[width=\linewidth]{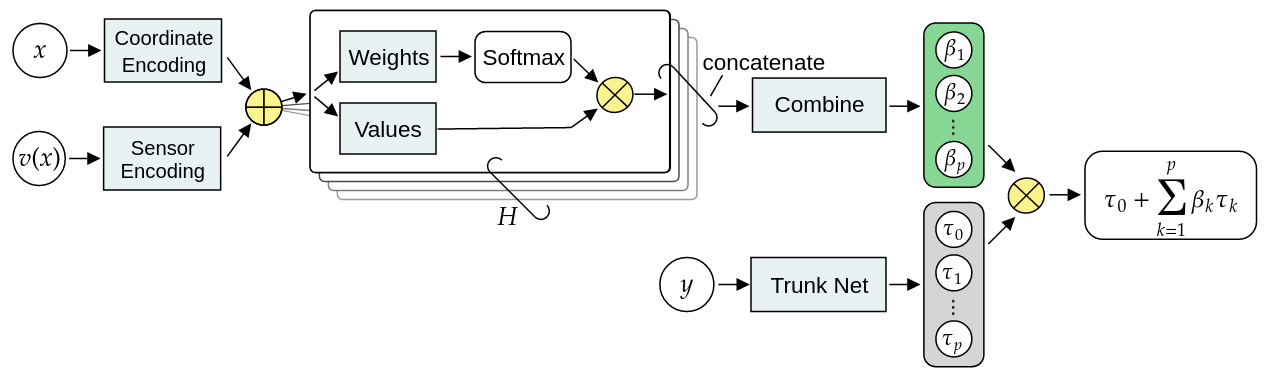}
\caption{Structure of VIDON \eqref{eq:tb-deeponet}. The  coordinates and values at each sensor are encoded through MLPs and are processed through another set of MLPs to compute weights and values. The resulting convex combination of the outputs from different sensors constitutes the output of a single head. Multiple heads are concatenated and combined through another MLP to yield the branch net, which is then combined with the trunk net to obtain the VIDON output. Light blue shading represents a MLP.}
\label{fig:2}
\end{figure}

\paragraph{Related work.}
Our proposed architecture, VIDON \eqref{eq:tb-deeponet}, is related to the DeepONet operator learning framework of \cite{deeponets} (see \cite{ChenChen1995} for the original (shallow) version of operator networks). As in the case of DeepONets, the underlying operator is approximated by VIDON with branch and trunk nets. While the structure of trunk nets is identical in both architectures, the 
branch net of VIDON is different as it allows variable number of sensors at variable locations. In particular, the branch net of VIDON is permutation-invariant. 
Moreover, one can recover DeepONets \eqref{eq:deeponet} as a special case of VIDON \eqref{eq:tb-deeponet} by setting $H=m$, $\Psi_c \equiv 0$, $\Psi_v \equiv \Id$, $\Tilde{\omega}^{(\ell)}_k = \delta_{k\ell}$ and $\Tilde{\nu}^{(\ell)} \equiv \Id $. 
Just like DeepONets, VIDONs can also be viewed as a neural operator by slightly modifying the construction proposed in \cite[Section 3.2]{NO}. Moreover, given the fact that FNOs can viewed as DeepONets with a fixed Fourier basis for trunk nets and sensors located on Cartesian grids \cite[Theorem 36]{kovachki2021universal}, fixing the Fourier basis for trunk nets of VIDON recovers FNO. However, FNO is only limited to sensors located at Cartesian grid points. Compared to both DeepONets and FNOs, VIDON's main advantage lies in its ability to handle input functions with both a variable number of sensor points as well as different sensor locations for each sample.    

Given it's designed to be invariant with respect to permutations of sensor locations, VIDON is related to deep learning architectures that approximate functions on sets. These include \emph{deep sets} \cite{zaheer2017deep,wagstaff2019limitations} as well as Neural Processes \cite{garnelo2018neural, garnelo2018conditional}, and Attentive Neural Processes \cite{kim2018attentive}. However, the branch net in VIDON has a more general structure, based on weighted sums of inputs \eqref{eq:tb-deeponet}. 

Finally, there are connections between VIDON and the extensive literature on transformers. To start with, the idea of positional encoding \eqref{eq:psi} is borrowed from the field of sequence modelling. Instead of the popular choice of fixing the encoding (e.g. a sinusoidal encoding in the transformers of \cite{vaswani2017attention}), we learn the optimal encoding using a MLP in VIDON. Another relevant transformer-based architecture is the \emph{Set Transformer} \cite{lee2019set}, which uses permutation-invariant attention blocks to learn functions on sets. However, the main difference between transformer based architectures and VIDON lies in the fact that the \emph{attention} head in transformers accounts for correlations between the different coordinates, whereas the \emph{head} \eqref{eq:nu} in VIDON does not require any interactions between different coordinates and only assigns a weight for each coordinate based on its intrinsic value. As a result, VIDON scales as $\bigO(m)$ instead of $\bigO(m^2)$ (as standard transformers do), with $m$ being the number of sensors. It is also worth mentioning recent works that use transformer-type architectures such as the Fourier and Galerkin transformers of \cite{cao2021choose} as well as the coupled attention-based LOCA framework of \cite{loca}. Although these architectures are shown to perform well on numerical experiments, they do not possess the rigorous theoretical guarantees nor the flexibility of inputs of VIDON.


\section{Rigorous analysis of Variable-Input Deep Operator Networks}\label{sec:analysis}
\paragraph{Universal approximation theorem.} Conventional neural networks are universal in the sense that they can approximate any continuous function. In similar vein, one can show that operator learning frameworks such as DeepONets (in \cite{LMK1}) and FNOs (in \cite{kovachki2021universal}) are also \emph{universal} in being able to, in principle, approximate any continuous operator. As a first step in the rigorous analysis of VIDON, we prove ({\bf SM} \ref{proof-universal}) the following universal approximation theorem for VIDON, 
\begin{theorem}\label{thm:universal-approximation}
Let $\cG:\cX\to H^s(U)$ be an $\alpha$-Hölder continuous operator, let $\mu$ be a measure on $L^2(D)$ whose covariance operator has a bounded eigenbasis with eigenvectors $\{\lambda_j\}_{j\in\N}$ and let $\cE:\cX \to (\R^{d+d_v})^{\leq M}$ for $M\in\N$ be a random encoder i.e., \eqref{eq:encoder} with the underlying sensor points being randomly drawn from the uniform distribution on $D$. Then for every $p\in \N$, there exists a VIDON $\cN: \R^{\leq M}\to H^s(U)$ \eqref{eq:tb-deeponet}, with $p$ branch and trunk nets such that for every $m\in\N$ with $m\leq M$ it holds with probability 1 that,
\begin{equation}\label{eq:universal-approximation}
    \norm{\cG(u_0)-\cN(\cE(u_0))}_{L^2(\mu)} \lesssim \left(\sum\nolimits_{j>m/C\log(m)}\lambda_j\right)^{\nicefrac{\alpha}{2}} + p^{-s/d}, 
\end{equation}
where the constant $C>0$ only depends on $\cG$ and $\mu$. 
\end{theorem}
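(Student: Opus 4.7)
The plan is to split the target error into a truncation/encoding contribution (giving the first term of \eqref{eq:universal-approximation}) and a neural network approximation contribution (giving the $p^{-s/d}$ term). Let $\{e_j\}_{j\ge 1}$ denote the eigenbasis of the covariance of $\mu$, let $P_N$ be the orthogonal projection onto $\mathrm{span}(e_1,\ldots,e_N)$ in $L^2(D)$, and set $N=\lfloor m/(C\log m)\rfloor$ for $C$ chosen large enough below. I would decompose
\begin{equation*}
\cG(u_0)-\cN(\cE_m(u_0)) \;=\; \bigl[\cG(u_0)-\cG(P_N u_0)\bigr] \;+\; \bigl[\cG(P_N u_0)-\cN(\cE_m(u_0))\bigr]
\end{equation*}
and control each bracket separately.

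The first bracket is purely analytic. By $\alpha$-Hölder continuity of $\cG$ and concavity of $t\mapsto t^\alpha$ applied via Jensen's inequality,
\begin{equation*}
\|\cG(u_0)-\cG(P_N u_0)\|_{L^2(\mu)} \;\lesssim\; \bigl(\mathbb{E}_{u_0\sim\mu}\|u_0-P_N u_0\|_{L^2(D)}^2\bigr)^{\alpha/2} \;=\; \Bigl(\sum\nolimits_{j>N}\lambda_j\Bigr)^{\alpha/2},
\end{equation*}
which is exactly the first term in \eqref{eq:universal-approximation}.

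For the second bracket I would construct the VIDON explicitly in two halves. First, choose $\tr_1,\ldots,\tr_p$ as MLP approximations of a $p$-term Sobolev basis on $U$, so that every $v\in H^s(U)$ is approximable in $L^2(U)$ at rate $p^{-s/d}\|v\|_{H^s}$; this reduces the task to approximating by $\branch=\Phi\circ\nu\circ\Psi$ the continuous finite-dimensional map $\Theta:\R^N\to\R^p$ sending the coefficient vector $(\langle u_0,e_\ell\rangle)_{\ell\leq N}$ to the trunk coefficients $(\langle\cG(P_N u_0),\tr_k\rangle_{L^2(U)})_{k\leq p}$. The crucial design step is to let the $H\ge N$ heads of $\nu$ Monte-Carlo estimate the spectral coefficients of $u_0$: choose the weight MLPs $\tilde{\omega}^{(\ell)}$ constant so that by \eqref{eq:omega} one has $\omega^{(\ell)}(\psi_j)=1/m$, and use standard MLP universal approximation to pick $\Psi_c$, $\Psi_v$ and $\tilde{\nu}^{(\ell)}$ so that $\tilde{\nu}^{(\ell)}(\psi_j)\approx |D|\,e_\ell(x_j)\,u_0(x_j)$. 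Then by \eqref{eq:nu} the $\ell$-th head computes $\tfrac{|D|}{m}\sum_j e_\ell(x_j)u_0(x_j)\approx\langle u_0,e_\ell\rangle$, and $\Phi$ is an MLP approximating $\Theta$ precomposed with the linear recovery map from estimated coefficients to $P_N u_0$.

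The main technical obstacle is the probability-one control of the Monte Carlo stage. The bounded-eigenbasis hypothesis (which I interpret as $\|e_\ell\|_\infty\leq C$ for all $\ell$) is exactly what enables a Hoeffding or matrix-Bernstein concentration argument, combined with a union bound over $\ell=1,\ldots,N$, to show that the sampling operator based on $m$ i.i.d.\ uniform sensors acts as an approximate isometry on $\mathrm{span}(e_1,\ldots,e_N)$ as soon as $m\gtrsim N\log N$; this is precisely the source of the $\log m$ loss in the cutoff $N\sim m/\log m$. Transferring the resulting coefficient error through the Hölder bound on $\cG$ one more time, and absorbing it into the truncation term, and finally combining with the trunk approximation error, then yields \eqref{eq:universal-approximation}.
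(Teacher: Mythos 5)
Your proposal takes a genuinely different route from the paper's proof, and it contains a gap: the probability-one claim cannot be delivered by the Hoeffding or matrix-Bernstein argument you invoke. A concentration inequality combined with a union bound over $\ell = 1,\dots,N$ gives a statement holding with probability at least $1 - \delta(m,N)$ for some $\delta > 0$ that decays in $m$, but for any \emph{finite} $m$ the failure event has strictly positive probability; this cannot produce the almost-sure bound required by the theorem. Moreover, your Monte-Carlo quadrature $\tfrac{|D|}{m}\sum_j e_\ell(x_j)u_0(x_j)$ incurs error of order $m^{-1/2}$ per coefficient, and after aggregating over $N \sim m/\log m$ coefficients this introduces an extra term that is absent from the target estimate \eqref{eq:universal-approximation}; you assert it can be absorbed into the truncation term, but no argument is given and it is not clear this works without changing the cutoff $N$.

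The paper sidesteps both issues by not building a Monte-Carlo estimator at all. It uses the encoder--decoder--approximator--reconstructor decomposition of \cite{LMK1}, invoking \cite[Lemma 3.4]{LMK1} for the error split, \cite[Theorem 3.7]{LMK1} for the probability-one encoding bound $\Err_\cE \lesssim (\sum_{j>m/C\log m}\lambda_j)^{1/2}$, and \cite[Theorem 3.5]{LMK1} together with Lemma \ref{lem:rec-fourier} for the $p^{-s/d}$ reconstruction rate. Crucially, the decoder $\cD$ there is a Moore--Penrose pseudoinverse of the sampling matrix (an optimal-recovery map), and its probability-one error bound is a statement about almost-sure properties of that pseudoinverse, not a concentration inequality. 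The variable-input permutation-invariance is then handled not by hand-designing the heads but by observing that $\cP\circ\cG\circ\cD:\R^{\le M}\to\R^p$ is a continuous permutation-invariant function, hence by \cite[Theorem 4.4]{wagstaff2019limitations} continuously sum-decomposable as $X\mapsto\rho\bigl(\sum_{x\in X}\varphi(x)\bigr)$; MLP universal approximation of $\rho$ and $\varphi$ then realizes the approximator as a VIDON branch with a single head, constant weights $\omega \equiv 1$, $\tilde\nu = \widehat\varphi$, $\Phi = \widehat\rho$. Incidentally, the Monte-Carlo construction you describe is exactly what the paper uses for the efficient-approximation results (Theorems \ref{thm:darcy}--\ref{thm:navier-stokes}, via Lemma \ref{lem:MC-Fourier}), where the statement is an \emph{expected-error} bound carrying an explicit $m^{-1/2}$ term rather than a probability-one bound; so your idea is the right tool for the wrong theorem here.
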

Thus, any H\"older-continuous operator that maps into a Sobolev space can be approximated by a VIDON to desired accuracy. These assumptions are satisfied by a wide variety of operators arising in ODEs and PDEs as seen in the following. To illustrate the quantitative error bound in \eqref{eq:universal-approximation}, we readily apply \eqref{eq:universal-approximation} to the often encountered example of Gaussian measures (see for instance \cite[Section 3.5.1]{LMK1} for definitions) to obtain, 
\begin{corollary}\label{cor:universal-approximation}
If the measure $\mu$ in the statement of Theorem \ref{thm:universal-approximation} is Gaussian, then \eqref{eq:universal-approximation} reduces to
\begin{equation}
     \norm{\cG(u_0)-\cN(\cE(u_0))}_{L^2(\mu)} \lesssim \exp(-Cm^2/\log(m)^2) + p^{-s/d}. 
\end{equation}
\end{corollary}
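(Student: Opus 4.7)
The plan is to invoke Theorem \ref{thm:universal-approximation} and simplify the first term in its bound using the eigenvalue decay rate that characterizes Gaussian measures. Specifically, for a Gaussian measure $\mu$ on $L^2(D)$ as defined in \cite[Section 3.5.1]{LMK1}, the eigenvalues $\lambda_j$ of the covariance operator exhibit super-exponential decay of the form $\lambda_j \lesssim \exp(-c\,j^2)$ for some constant $c>0$ depending only on $\mu$. This rate reflects the inherent smoothness (analyticity) of the associated covariance kernel.

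With the eigenvalue decay in hand, the argument reduces to a routine tail estimate. Setting $N = m/(C\log m)$ and using that the sequence $(e^{-cj^2})_{j\geq N}$ is super-geometric (so the whole tail is dominated by a constant multiple of its leading term), one obtains
\begin{equation*}
    \sum_{j > N} \lambda_j \;\lesssim\; \sum_{j > N} e^{-cj^2} \;\lesssim\; e^{-cN^2} \;=\; \exp\!\left(-\,c\,m^2/(C\log m)^2\right).
\end{equation*}
Raising to the power $\alpha/2$ and absorbing the resulting constant into a new $C>0$, this becomes
\begin{equation*}
    \left(\sum_{j > m/(C\log m)} \lambda_j\right)^{\alpha/2} \;\lesssim\; \exp\!\left(-Cm^2/\log(m)^2\right),
\end{equation*}
and substituting this into \eqref{eq:universal-approximation} yields exactly the claimed bound, the second term $p^{-s/d}$ being carried over unchanged.

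The only genuinely nontrivial ingredient is the super-exponential decay $\lambda_j \lesssim e^{-cj^2}$ of the covariance eigenvalues. This is where the Gaussian hypothesis on $\mu$ is used and is the sole point at which a fact specific to Gaussian measures is invoked; once it is granted, everything else is an elementary computation. Hence I expect the main obstacle to be verifying (or carefully citing from \cite{LMK1}) the precise form of this eigenvalue decay for the class of Gaussian measures at hand; the tail-sum manipulation and substitution into Theorem \ref{thm:universal-approximation} are then immediate.
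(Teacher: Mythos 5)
The paper does not supply a separate proof of this corollary, treating it as a direct instantiation of Theorem~\ref{thm:universal-approximation}, so the comparison is necessarily against the obvious intended argument---and your proof is exactly that argument, carried out correctly. The tail-sum manipulation is sound: with $\lambda_j \lesssim e^{-cj^2}$ the series $\sum_{j>N} e^{-cj^2}$ is indeed bounded by a constant multiple of $e^{-cN^2}$ (as you note, the tail is super-geometric), and substituting $N = m/(C\log m)$ then raising to the power $\alpha/2$ produces precisely $\exp(-Cm^2/\log(m)^2)$ after absorbing constants, with the $p^{-s/d}$ term unchanged.

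The one point worth being more careful about---and you already flag it---is the eigenvalue-decay hypothesis itself. The decay $\lambda_j \lesssim e^{-cj^2}$ is not a property of Gaussian measures in general (a Gaussian measure may have any trace-class covariance, including polynomially decaying eigenvalues, as in the $\mathbf{C} = (-\Delta + 9I)^{-2}$ example used in the paper's own Darcy experiment). It is specific to the Karhunen--Lo\`eve parametrization discussed in \cite[Section 3.5.1]{LMK1}, where the covariance spectrum is taken to be (squared-)exponentially decaying, and even there the exponent typically depends on the spatial dimension $d$. So the phrase ``this rate reflects the inherent smoothness (analyticity) of the associated covariance kernel'' should be replaced by a precise citation of the decay rate asserted in LMK1's Gaussian example; once that single fact is pinned down, the rest of your derivation is a complete and correct proof of the corollary.
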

Hence for Gaussian measures, we see that the error decays exponentially in the number of sensors. 

\paragraph{Computational complexity of VIDON.} The universal approximation theorem (Theorem \ref{thm:universal-approximation}) shows that one can find a VIDON such that the approximation error for the underlying operator can be made as small as needed, as long as one chooses $m$ and $p$ to be large enough. Although this result provides information about the required number of sensors $m$ and number of branch and trunk nets $p$ to obtain a certain accuracy, it does not reveal how large the underlying networks should be. 

Given the underlying infinite-dimensional nature of the operator, one observes that the network size of both DeepONets \cite[Remark 3.2]{LMK1} and FNOs \cite[Remark 22]{kovachki2021universal} can grow (super)exponentially with respect to the desired accuracy, while approximating general continuous operators. By leveraging the relation between VIDON and DeepONets, one can translate these results to show that the needed size of VIDON (see {\bf SM} Remark \ref{rem:size} for definitions) to approximate certain operators $\cG$ to an accuracy $\epsilon>0$ may grow as $\epsilon^{-\xi(\epsilon)}$, where $\xi:(0,\infty)\to\N$ is a monotonically decreasing function (see {\bf SM} Remark \ref{rem:yarotsky} for details). Such exponential growth clearly inhibits efficient approximation of operators. However, in \cite{LMK1} and \cite{kovachki2021universal}, it was shown that in the special case of operators arising in a wide variety of PDEs, DeepONets and FNOs, respectively, can mitigate this exponential growth in size as one can prove that the corresponding network size will only grow \emph{polynomially} with respect to the error. We will prove similar \emph{efficient approximation} results for VIDON. To do so, we follow \cite{LMK1,kovachki2021universal} and consider several prototypical examples of operators arising in PDEs, defined on the $d$-dimensional torus $D = \T^d=[0,2\pi)^d$. This should however not be seen as a restriction, as for every  Lipschitz domain $D$ with $\overline{D}\subset \T^d$ there exists a (continuous and linear) periodic extension operator $\mathfrak{E}:W^{m,p}(D)\to W^{m,p}(\T^d)$, $m\in\N, 1\leq p\leq \infty$, such that $\mathfrak{E}(u)\vert_D = u$ and $\mathfrak{E}(u)$ and its derivatives are $\T^d$-periodic \cite[Lemma 41]{kovachki2021universal}. 

\paragraph{Darcy-type elliptic PDE.} 
We start with a standard elliptic PDE that models the steady-state pressure for a fluid flowing in a porous medium according to the Darcy’s law, or diffusion of heat in a material with variable thermal conductivity. 
Given $ \cA^\ell_\lambda(\T^d) = \{a\in H^\ell(\T^d)\:\vert\: \norm{a}_{H^\ell(\T^d)}\leq \lambda^{-1}, \: \norm{a-1}_{L^\infty(\T^d)}\leq 1-\lambda\}$ for $\lambda>0$ and $\ell\in\N$, we consider the operator $\G: \cA^\ell_\lambda(\T^d) \to \dot{H}^1(\T^d)$, $a \mapsto u$, where $u$ solves the Darcy equation 
\begin{equation}
\label{eq:darcy}
-\nabla \cdot \left(a\nabla u\right) = f,
\quad
\fint_{\T^d} u(x) \, dx = 0,
\end{equation}
on the periodic torus $\T^d$, with right-hand side $f \in \dot{H}^{k-1}$. For this nonlinear operator that maps the coefficients of a PDE to its solution, we have the following result on its approximation by VIDON,

\begin{theorem}\label{thm:darcy}
Let $\ell>d$, $\lambda\in (0,1)$ and let $\cE$ be a random encoder. There exists a VIDON $\cN:(\R^{d+1})^\cF \to H^1(\T^d)$ \eqref{eq:tb-deeponet}, with $p$ trunk nets such that for every $a\in \cA^\ell_\lambda(\T^d)$ and $m\geq p^{1+\ell/d}$,
\begin{equation}\label{eq:darcy-estimate}
    \E{\norm{\cN(\cE_m(a))-\cG(a)}_{L^2(\T^d)}} \leq  C(p^{1-\ell/d}+p^{2}m^{-1/2}). 
\end{equation}
It holds that $\mathrm{depth}(\branch) = \bigO(\log(p))$, $\mathrm{width}(\branch) = \bigO(p^{(d+1)/d})$, $\mathrm{size}(\branch) = \bigO(p^3)$,  $\mathrm{depth}(\trunk) = 2$, $\mathrm{width}(\trunk) = \bigO(p^{(d+1)/d})$ and $\mathrm{size}(\trunk) = \bigO(p^{(d+2)/d})$ In particular, to obtain an accuracy of $\epsilon>0$ in \eqref{eq:darcy-estimate}, it suffices that $ \mathrm{size}(\cN) = \bigO(\epsilon^{-3d/(\ell-d)})$ and $m=\bigO(\epsilon^{-2(\ell+d)/(\ell-d)})$.
\end{theorem}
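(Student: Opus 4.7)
The strategy is to leverage the DeepONet approximation result for the Darcy operator established in \cite{LMK1}, which already achieves a $\bigO(p^{1-\ell/d})$ accuracy using a DeepONet of the sizes claimed here, and to upgrade it to handle VIDON's random encoder by replacing the regular-grid sampling of the input with a Monte-Carlo quadrature implemented by the attention-based branch. Concretely, I would decompose the DeepONet branch $\branch^*$ from \cite{LMK1} as $\branch^* = \Phi_{\mathrm{DON}} \circ \cF_p$, where $\cF_p:\cA^\ell_\lambda(\T^d)\to \R^{\bigO(p)}$ extracts approximate Fourier coefficients of $a$ by quadrature on the regular DeepONet grid, and $\Phi_{\mathrm{DON}}$ is a ReLU network of size $\bigO(p^3)$ realising the coefficient-to-coefficient map that produces the $p$ output coefficients of $u$ in a truncated Fourier expansion. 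The trunk net $\trunk$ is then identical to the one in \cite{LMK1}, namely a two-layer MLP that approximates the first $p$ Fourier basis functions of $\dot{H}^1(\T^d)$ with width $\bigO(p^{(d+1)/d})$; this supplies at once the $\depth(\trunk)=2$, $\width(\trunk)=\bigO(p^{(d+1)/d})$ and $\size(\trunk)=\bigO(p^{(d+2)/d})$ bounds.

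The key new step is to build the VIDON branch that reproduces $\cF_p$ from random samples. I would choose the positional and value encoders so that $\psi_j$ preserves both $x_j$ and $a(x_j)$, use $H=\bigO(p)$ heads, one per Fourier mode $k$ with $|k|\lesssim p^{1/d}$, and for each head set $\tilde{\omega}^{(\ell)}\equiv 0$ so that the softmax weights in \eqref{eq:omega} collapse to the uniform weight $1/m$. The value MLP of head $\ell$ is chosen to approximate $\tilde{\nu}^{(\ell)}(x,v)\approx (2\pi)^d v\,\phi_k(x)$ up to an error much smaller than $p^{1-\ell/d}$, using a standard ReLU approximation of the smooth function $(x,v)\mapsto v\phi_k(x)$. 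The resulting head output $\nu^{(\ell)}\approx \frac{(2\pi)^d}{m}\sum_{j=1}^m a(x_j)\phi_k(x_j)$ is a Monte-Carlo estimator of $\int_{\T^d} a\phi_k\,dx$ whose expected absolute error is $\bigO(\|a\|_{L^\infty}m^{-1/2})=\bigO(m^{-1/2})$, where the bound $\|a\|_{L^\infty}\le 2-\lambda$ follows directly from the definition of $\cA^\ell_\lambda(\T^d)$. Taking $\Phi:=\Phi_{\mathrm{DON}}$ then yields the VIDON branch, and a direct parameter count recovers $\depth(\branch)=\bigO(\log p)$, $\width(\branch)=\bigO(p^{(d+1)/d})$ and $\size(\branch)=\bigO(p^3)$.

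The error bound \eqref{eq:darcy-estimate} would then be split by the triangle inequality into a deterministic part and a stochastic part. The deterministic contribution $\bigO(p^{1-\ell/d})$ is imported from \cite{LMK1} and combines the spectral regularity of the Darcy operator with the accuracy of $\Phi_{\mathrm{DON}}$ on exact Fourier data. The stochastic contribution arises from replacing exact Fourier coefficients by their Monte-Carlo estimates: since each has expected error $\bigO(m^{-1/2})$, the aggregated $\ell^2$-error across the $\bigO(p)$ modes is $\bigO(\sqrt{p}\,m^{-1/2})$, and propagating this through $\Phi_{\mathrm{DON}}$ with a Lipschitz constant of order $\bigO(p^{3/2})$ produces the $\bigO(p^2 m^{-1/2})$ term. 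The main obstacle will be establishing this sharp $\bigO(p^{3/2})$ Lipschitz estimate for $\Phi_{\mathrm{DON}}$ while simultaneously preserving the cubic bound on its size: it requires tracking how $\lambda$ controls the condition number of the spectrally-discretised Darcy operator, and how the ReLU networks realising its approximate inverse can be implemented with bounded weights, both ingredients being implicit but not highlighted in \cite{LMK1}. Once this is in place, balancing $p^{1-\ell/d}\sim\epsilon$ and $p^2 m^{-1/2}\sim\epsilon$ yields the advertised $\size(\cN)=\bigO(\epsilon^{-3d/(\ell-d)})$ and $m=\bigO(\epsilon^{-2(\ell+d)/(\ell-d)})$.
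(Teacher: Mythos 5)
Your high-level strategy — replace the regular-grid input of the DeepONet branch by Monte-Carlo estimates of Fourier coefficients computed with uniform attention weights, then feed these into a ReLU network emulating a spectral Darcy solver and reconstruct with a Fourier trunk — matches the paper's proof. The paper uses a single head with a vector-valued $\Tilde{\nu}$ that outputs (approximate) grid values $a_{K,N}(x_j)$ and sets $\Phi := \cB_M \circ \cN_\Psi$ (a $\Psi$-FNO from \cite{kovachki2021universal} plus a linear readout), whereas you split the modes across $\bigO(p)$ heads and feed coefficients directly to $\Phi_{\mathrm{DON}}$; this is a cosmetic difference in how the VIDON components are assigned, and either choice can be made to hit the $\bigO(p^3)$ branch size.

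The genuine gap is in your error propagation. You aggregate per-coefficient Monte-Carlo errors in $\ell^2$ to get $\bigO(\sqrt{p}\,m^{-1/2})$ and then invoke an $\bigO(p^{3/2})$ Lipschitz constant for $\Phi_{\mathrm{DON}}$ to produce the $p^2 m^{-1/2}$ term. That Lipschitz constant is reverse-engineered to make the exponents match; you do not argue where it comes from, and it is not the natural quantity here. The map $a\mapsto u$ for Darcy flow is \emph{not} Lipschitz from $L^2$ to $L^2$ on all of $\cA^\ell_\lambda$: the well-posedness requires $a$ to be uniformly bounded below, and a small $L^2$ (equivalently $\ell^2$-coefficient) perturbation can destroy that lower bound. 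The paper's route avoids this entirely: it measures the Monte-Carlo error of $a_{K,N}$ in $H^d$ via Lemma \ref{lem:MC-Fourier} (giving the $K^{2d}N^{-1/2} = p^2 m^{-1/2}$ term through the $s=d$ penalty, not through any Lipschitz constant of the emulating network), uses the Sobolev embedding $H^d \hookrightarrow L^\infty$ to control $\|a-a_{K,N}\|_{L^\infty}$, deduces $\inf a_{K,N} \ge \lambda/2$ for suitable $K,N$, and then applies the $L^\infty$-to-$L^2$ Lipschitz stability of the Darcy solution from \cite[Lemma 4.7]{LMK1} with a constant that does not grow with $p$. Your sketch also omits the step of verifying that $a_{K,N}$ remains in (a neighborhood of) $\cA^\ell_\lambda$, which is precisely what makes the $L^\infty$ control essential and what would have steered you away from the $\ell^2$/Lipschitz route. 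Replacing your propagation argument by this $L^\infty$-stability argument both closes the gap and removes the need to control the conditioning of $\Phi_{\mathrm{DON}}$ at all.
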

We sketch the proof here, while deferring the details to {\bf SM} \ref{proof-darcy}. First, we approximate the coefficient $a$ by a truncated version of its Fourier series and then further approximate the underlying Fourier coefficients based on the random encoder $\cE_m(a)$ using a Monte-Carlo approach ({\bf SM} Lemma \ref{lem:MC-Fourier}). Emulating a pseudospectral numerical scheme for the Darcy equation by neural networks as in \cite[Theorem 3.5]{kovachki2021universal} and approximating the Fourier basis by neural networks ({\bf SM} Lemma \ref{lem:rec-fourier}) then gives rise to a suitable approximation of the operator $\cG(a)$. We observe from the above theorem that, given an error tolerance $\epsilon$, the size of VIDON and the number of sensors only grow polynomially in $\epsilon$. 

\paragraph{Nonlinear parabolic PDE: Allen-Cahn equation.}
Next, we consider a nonlinear time-dependent \emph{parabolic PDE}, the so-called Allen-Cahn equation, which models reaction-diffusion phenomena with phase separations and transitions,
\begin{equation}
\label{eq:AC}
    \partial_t u = \Delta_x u + \frac{u(u^2-1)}{\epsilon^2}, \qquad u(t=0)=u_0
\end{equation}
where $u$ is the state and $\epsilon>0$ is a parameter that controls the contribution of the nonlinear term. We are interested in approximating the solution or time-evolution operator $\cG: C^\ell(\T^d) \to C^\ell(\T^d): u_0\mapsto u(T)$, mapping initial conditions $u_0$ to solutions at final time $T$. We have the following theorem (proved in {\bf SM} \ref{proof-AC}) on the \emph{efficient} approximation of this operator with VIDON,

\begin{theorem}\label{thm:allen-cahn}
Let $p, \ell\in\N$ with $\ell\geq 4$ and let $\cE$ be a random encoder. There exists a VIDON $\cN:(\R^{d+1})^\cF\to C(\T^d)$ \eqref{eq:tb-deeponet}, with $p$ trunk nets such that for every $u_0\in C^\ell(\T^d)$ and $m\in\N$, 
\begin{equation}\label{eq:AC-estimate}
\E{\Vert \G(u_0) - \cN(\cE_m(u_0)) \Vert_{L^2(\T^d)}}
\le
C(p^{-\ell/d}+p^{3(\ell+d)/(\ell-d)}m^{-1/2}).
\end{equation}
It holds that $\mathrm{depth}(\branch) = \bigO(p^{1+\ell/d})$, $\mathrm{width}(\branch) = \bigO(p^{(d+\ell)/2+1/d})$, $\mathrm{size}(\branch) = \bigO(p^{(d+\ell)/2+2/d})$, $\mathrm{depth}(\trunk) = 2$, $\mathrm{width}(\trunk) = \bigO(p^{(d+1)/d})$ and $\mathrm{size}(\trunk) = \bigO(p^{(d+2)/d})$. In particular, to obtain an accuracy of $\epsilon>0$ in \eqref{eq:AC-estimate}, it suffices that $\mathrm{size}(\cN) = \bigO(\epsilon^{-d(d+\ell)/2\ell - 2\ell})$ and $m=\bigO(\epsilon^{-2-6d(\ell+d)/\ell(\ell-d)})$. 
\end{theorem}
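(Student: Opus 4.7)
My plan is to mimic the three-step scheme used for Theorem~\ref{thm:darcy}, replacing the elliptic emulation step by a neural-network emulation of a semi-implicit time-stepping pseudospectral scheme for \eqref{eq:AC}. First I would approximate the initial datum $u_0\in C^\ell(\T^d)$ by its Fourier truncation $P_N u_0$ with $N\sim p^{1/d}$ modes; standard Jackson-type estimates then give $\|u_0-P_N u_0\|_{L^2}\lesssim N^{-\ell}\|u_0\|_{C^\ell}\sim p^{-\ell/d}$, which, propagated through the (locally Lipschitz) solution operator $\cG$, produces the first term in \eqref{eq:AC-estimate}.

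Next I would realise the $\sim p$ required Fourier coefficients of $u_0$ as Monte-Carlo averages over the random sensor values, exploiting the fact that a VIDON head $\nu^{(\ell)}$ defined in \eqref{eq:nu} is precisely a weighted sum over the encoded sensors $\{\psi_j\}_{j=1}^m$ and can therefore emulate an unbiased empirical estimator of each coefficient; here I would invoke Lemma~\ref{lem:MC-Fourier} to control the per-coefficient Monte-Carlo error and Lemma~\ref{lem:rec-fourier} to implement the Fourier basis itself by an MLP of controlled size. This yields an approximate vector of Fourier coefficients with expected $\ell^2$-error of order $p^{1/2}m^{-1/2}$.

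Having obtained these approximate coefficients, I would then emulate a semi-implicit pseudospectral time-stepping scheme for \eqref{eq:AC} by a deep neural network following the construction of \cite[Theorem 4.1]{kovachki2021universal}: one block per time step, with $\bigO(p^{1+\ell/d})$ time steps sufficing to resolve the solution up to accuracy $p^{-\ell/d}$ at time $T$, and each block of width $\bigO(p^{(d+\ell)/2+1/d})$ to emulate the cubic nonlinearity $u(u^2-1)/\epsilon^2$ together with forward and inverse discrete Fourier transforms. This block structure matches the claimed $\depth(\branch)$, $\width(\branch)$ and $\size(\branch)$; the trunk net simply needs to represent the $p$ leading Fourier basis functions, exactly as in the Darcy case, giving the stated trunk size. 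Combining the three error sources and balancing $p^{-\ell/d}\sim\epsilon$ then yields $p\sim\epsilon^{-d/\ell}$ and $m\sim\epsilon^{-2-6d(\ell+d)/\ell(\ell-d)}$, producing the asymptotic bounds for $\size(\cN)$ and the number of sensors.

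The main obstacle will be controlling the stability amplification: unlike the elliptic Darcy step, the Allen--Cahn evolution is only conditionally stable, so the Monte-Carlo error on the Fourier coefficients is amplified by the Lipschitz constant of the emulated time-stepping network, measured on the relevant set of approximate coefficients. This Lipschitz constant grows polynomially in $p$ through both the number of time steps and the inverse-Sobolev embedding constants relating $\ell^2$-coefficient norms to $L^\infty$-norms, and the non-trivial task is to show that for $\ell\geq 4$ this growth is captured precisely by the factor $p^{3(\ell+d)/(\ell-d)}$ appearing in \eqref{eq:AC-estimate}. I would handle this by combining an a priori $L^\infty$-bound on the emulated trajectory with a discrete Grönwall argument across the time-stepping layers, keeping track of how the polynomial growth of the Lipschitz constants of each block compounds through the network while remaining dominated by the stated exponent.
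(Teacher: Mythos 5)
Your high-level architecture matches the paper's: Monte--Carlo Fourier encoding of $u_0$ (Lemmas~\ref{lem:MC-Fourier} and \ref{lem:rec-fourier}), neural emulation of a time-stepping scheme layer-by-layer, and trigonometric reconstruction by the trunk. You also quote the correct final exponents for $\depth$, $\width$, $p$, $m$. But two of the quantitative steps in your plan do not work as described, and one of your diagnoses of ``where the hard part is'' is aimed at a factor that the paper controls trivially.

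\textbf{The Fourier truncation level of the input must be finer than $p^{1/d}$.} You propose to truncate $u_0$ at $N\sim p^{1/d}$ modes and claim a resulting $L^2$ truncation error of $p^{-\ell/d}$. This is fine for the $L^2$ truncation error alone, but the construction needs a \emph{pointwise} ($L^\infty$) error bound for the encoded initial data at the $\bigO(M^d)$ finite-difference/pseudospectral grid points, and that $L^\infty$ error then gets multiplied by the reconstruction factor $Z^d\sim p$ when passing from a max-norm bound on the nodal values back to an $L^2$-bound on the reconstructed field. Using Lemma~\ref{lem:MC-Fourier} with the $H^d\hookrightarrow C^0$ embedding, the expected $C^0$-error of the truncated-plus-Monte-Carlo encoding is $\bigO(K^{d-\ell}+K^{2d}m^{-1/2})$. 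With your choice $K=p^{1/d}$ the first term is $p^{(d-\ell)/d}=p^{1-\ell/d}$, and after the factor $Z^d=p$ this becomes $p^{2-\ell/d}$, which destroys the target $p^{-\ell/d}$. The paper avoids this by over-resolving the input, taking $K=Z^{(\ell+d)/(\ell-d)}=p^{(\ell+d)/(d(\ell-d))}$, which is strictly larger than $p^{1/d}$. Correspondingly your stated Monte--Carlo error of order $p^{1/2}m^{-1/2}$ is not the one that appears; the relevant contribution is $Z^dK^{2d}m^{-1/2}\sim p^{1+2(\ell+d)/(\ell-d)}m^{-1/2}$, which is what ultimately underlies the exponent in \eqref{eq:AC-estimate}.

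\textbf{The $p^{3(\ell+d)/(\ell-d)}$ factor is not a Grönwall/stability amplification.} Your proposal identifies the main obstacle as controlling a polynomially-growing Lipschitz constant of the time-stepping network via a discrete Grönwall argument through the depth. This is not the mechanism in the paper: the finite-difference scheme from \cite[Thm.~4.1]{tang2016implicit} as refined in \cite[Thm.~4.14]{LMK1} carries a stability constant $\exp(CT\|u\|_{C^{(2,4)}})$, which is a fixed constant independent of $n$, $M$, $K$, $Z$, $p$. There is nothing $p$-dependent to Grönwall over. The polynomial prefactor in front of $m^{-1/2}$ is pure bookkeeping: it is the product of (i) the $\ell^\infty\!\to\!L^2$ reconstruction cost $Z^d=p$ and (ii) the Monte--Carlo error amplitude $K^{2d}$ at the over-resolved input level $K=Z^{(\ell+d)/(\ell-d)}$ required to keep the Fourier truncation of $u_0$ from contaminating the $Z^{-\ell}$ output truncation error. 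If you pursue your pseudospectral variant via \cite{kovachki2021universal}, you will still face the same balancing problem and will still need $K$ strictly larger (in $p$) than $Z$; the Grönwall constant is not where the exponent lives.

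\end{document}
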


\paragraph{Navier-Stokes equations.} The motion of a viscous, incompressible Newtonian fluid is modeled by the well-known incompressible Navier-Stokes equations,
\begin{align} \label{eq:NS}
\begin{aligned}
\partial_t u + u\cdot \nabla u + \nabla p = \nu \Delta u, \quad
\div(u) = 0, \; u(t=0) = u_0.
\end{aligned}
\end{align}
Here, $u \in \R^d$ is the velocity and $p\in \R$ is the pressure of the fluid, $\nu$ the viscosity and the initial velocity is denoted by $u_0$. For simplicity, we assume periodic boundary conditions in the domain $\T^d$ and consider solutions of \eqref{eq:NS} that are in the set $\cV \subset C([0,T]; H^r) \cap C^1([0,T];H^{r-2})$ (see {\bf SM} Definition \ref{def:V-NS} for notation). Writing $\cV_t := \{u(t)\:\vert\:u\in \cV\}$, we want to approximate the solution operator $\cG$ of \eqref{eq:NS}, mapping initial data $u_0 = u(t=0)$, to the solution $u(T)$ at $t=T$. We have the following theorem (proved in {\bf SM} \ref{proof-NS}) on \emph{efficient} approximation of this operator with VIDON,

\begin{theorem}\label{thm:navier-stokes}
Let $p\in\N$, $r\ge d/2 +2$ and let $\cE$ be a random encoder. There exists a Variable-Input Deep Operator Network $\cN:(\R^{d+1})^\cF \to H^1(\T^d)$ with $p$ trunk nets such that for every $u_0\in \cV_0$ and $m\in\N$, 
\begin{equation}\label{eq:NS-estimate}
\E{\Vert \G(u_0) - \cN(\cE_m(u_0)) \Vert_{L^2(\T^d)}}
\le
C(p^{-r/d}+pm^{-1/2}).
\end{equation}
It holds that $\mathrm{depth}(\branch) = \bigO(\log(p))$, $\mathrm{width}(\branch) = \bigO(p^{(d+1)/d})$, $\mathrm{size}(\branch) = \bigO(p^3)$,  $\mathrm{depth}(\trunk) = 2$, $\mathrm{width}(\trunk) = \bigO(p^{(d+1)/d})$ and $\mathrm{size}(\trunk) = \bigO(p^{(d+2)/d})$. In particular, to obtain an accuracy of $\epsilon>0$ in \eqref{eq:darcy-estimate}, it suffices that $ \mathrm{size}(\cN) = \bigO(\epsilon^{-3d/r})$ and $m=\bigO(\epsilon^{-2-2d/r})$.
\end{theorem}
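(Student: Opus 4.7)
The plan is to mirror the three-step blueprint already used in the Darcy proof sketch, but replace the steady-state solver by the Navier--Stokes time-evolution solver whose emulation by neural networks has been worked out for FNOs in \cite{kovachki2021universal}. Concretely, I would decompose the error into (i) a spectral truncation error controlled by the $H^r$ regularity of $u$, (ii) a Monte-Carlo sampling error coming from estimating Fourier coefficients of $u_0$ from the random sensor values, and (iii) a neural-network emulation error for both a pseudospectral Navier--Stokes solver (in the branch) and the trigonometric basis (in the trunk). The trunk contribution is analogous to Darcy: use \textbf{SM} Lemma \ref{lem:rec-fourier} to approximate the first $p$ Fourier basis functions by an MLP of depth $2$, width $\bigO(p^{(d+1)/d})$, and size $\bigO(p^{(d+2)/d})$, which directly explains the trunk scalings in the statement.

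First, I would fix $N \sim p^{1/d}$ so that the Galerkin projection $P_N u_0$ onto trigonometric polynomials of degree $\leq N$ satisfies $\|u_0 - P_N u_0\|_{L^2} \lesssim N^{-r}\|u_0\|_{H^r} \lesssim p^{-r/d}$ on $\cV_0$; because the Navier--Stokes solution map is locally Lipschitz from $H^r$ to $L^2$ on $\cV$ (for $r \geq d/2 + 2$, so that Sobolev embedding controls the nonlinearity), this truncation error propagates to an $L^2$ error of order $p^{-r/d}$ at time $T$. Second, I would apply \textbf{SM} Lemma \ref{lem:MC-Fourier} to replace the exact Fourier coefficients $\widehat{u}_0(k)$ (for $|k|\leq N$) by Monte-Carlo estimators $\tfrac{(2\pi)^d}{m}\sum_{j=1}^m u_0(x_j)\, e^{-\mathrm{i} k\cdot x_j}$ based on the i.i.d.\ uniform sensors supplied by the random encoder $\cE_m$. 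Each coefficient has mean-squared error $\bigO(m^{-1/2})$, and summing over the $\bigO(p)$ retained modes, combined with the local Lipschitz stability of the spectral solver, yields the $pm^{-1/2}$ term in \eqref{eq:NS-estimate}.

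Third, I would emulate the pseudospectral time-stepping scheme of \cite[proof of the Navier--Stokes approximation result]{kovachki2021universal} by a neural network acting on the vector of approximate Fourier coefficients, and package this emulation inside the branch $\bm{\beta} = \Phi \circ \nu \circ \Psi$ of VIDON. The attention-style head $\nu$ with appropriate choices of $\widetilde{\omega}^{(\ell)}$ and $\widetilde{\nu}^{(\ell)}$ (one head per retained Fourier mode, implementing the Monte-Carlo quadrature as a weighted sum over sensors in \eqref{eq:nu}) delivers the estimated coefficients to $\Phi$, which then emulates the spectral solver. The size bounds for the Navier--Stokes emulator from \cite{kovachki2021universal}, combined with the encoding MLPs $\Psi_c,\Psi_v$ of width $\bigO(p^{(d+1)/d})$, yield $\depth(\bm{\beta}) = \bigO(\log p)$, $\width(\bm{\beta}) = \bigO(p^{(d+1)/d})$, $\size(\bm{\beta})=\bigO(p^3)$. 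Balancing $p^{-r/d} \sim \epsilon$ and $pm^{-1/2}\sim\epsilon$ gives $p \sim \epsilon^{-d/r}$ and $m \sim \epsilon^{-2-2d/r}$, hence $\size(\cN) = \bigO(\epsilon^{-3d/r})$.

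The main obstacle I expect is the stability propagation in step two: the Navier--Stokes solution operator is only locally Lipschitz, and the Monte-Carlo perturbations of the Fourier coefficients must be controlled in a norm strong enough to keep the perturbed initial data inside a set where this Lipschitz bound holds uniformly. Care is therefore needed to argue that the random encoder produces, with the required probability (or in expectation as stated), coefficient estimates that lie in a fixed bounded subset of $H^r$, so that the spectral solver's stability constant can be absorbed into $C$ independently of $p$ and $m$. The remaining ingredients, namely the spectral convergence rate and the neural-network emulation of the pseudospectral scheme and of the Fourier basis, are by now standard and transfer essentially verbatim from the FNO/DeepONet analyses in \cite{kovachki2021universal,LMK1}.
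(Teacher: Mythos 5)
Your proposal follows essentially the same blueprint as the paper's proof: Monte-Carlo estimation of Fourier coefficients via Lemma \ref{lem:MC-Fourier}, emulation of a pseudospectral Navier--Stokes solver adapted from \cite[Theorem 3.12]{kovachki2021universal}, Fourier-basis trunk nets via Lemma \ref{lem:rec-fourier}, and the same balancing $p\sim\epsilon^{-d/r}$, $m\sim\epsilon^{-2-2d/r}$. The one noticeable difference lies in how you propagate the Monte-Carlo perturbation of the initial data through to the final time: you invoke local Lipschitz stability of the continuous Navier--Stokes solution operator from $H^r$ to $L^2$, whereas the paper instead opens up the $\Psi$-FNO error analysis of \cite[Section F.2.5]{kovachki2021universal} and replaces the single initial projection error term $\Vert(1-\dot{P}_M\cI_{2M})a\Vert_{L^2}$ by the corresponding estimate for the perturbed datum $a_{K,N}$, keeping the whole argument inside the discrete FNO framework rather than appealing to well-posedness of the continuous PDE. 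The two routes are morally equivalent and produce the same rates. The stability concern you flag in your last paragraph --- that the Monte-Carlo estimate must remain in a fixed bounded subset of $H^r$ so that the stability constant is uniform in $p$ and $m$ --- is a genuine subtlety; the paper handles the analogous issue explicitly in the Darcy proof (by enlarging $\lambda$ to absorb the $H^\ell$ Monte-Carlo fluctuation when $N^{1/2}\geq K^{\ell+d}$) but leaves it implicit in the Navier--Stokes proof, so you have correctly identified a point where the paper's argument is more condensed than rigorous.
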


\section{Experiments}
\label{sec:4}

\paragraph{Setting.} We will compare the performance of VIDON to baselines, DeepONets \cite{deeponets} and FNOs \cite{FNO}. As the main point of VIDON is the flexibility with respect to the number of sensors and their locations, we will consider the following configurations (with increasing levels of difficulty) in each experiment (see {\bf SM} Figure \ref{fig:SM1} for illustrations). {\em Regular Grid}: The sensors are located on a regular Cartesian grid, with equally spaced grid sizes in each direction, on the domain $D$. This is an idealized situation (particularly for training data based on physical measurements) and rarely holds in practice. {\em Irregular Grid}: Here, the sensor locations are on a fixed non-Cartesian grid for all training and test samples. {\em Missing Data.} To model missing sensor data, we randomly delete a certain proportion (up to $20\%$) of sensor locations from the original Cartesian grid. {\em Perturbed Grid}: We model both missing data as well as uncertainty in sensor locations by randomly perturbing the original sensor locations (on a regular Cartesian Grid) and deleting up to 20\% of them. {\em Random Sensor Locations.} Here, although the number of sensors is fixed across samples, their location is chosen randomly for each sample. {\em Variable Random Locations.}  We randomly vary both the number (with a variance of up to $10\%$) as well as the location of sensors, across samples, in this configuration. 

Given this lineup of configurations, FNO can only be evaluated for the idealized configuration ({\em Regular Grid}), DeepONets can be evaluated for {\em Regular Grid} and {\em Fixed Irregular Grid} whereas VIDON is designed to handle every single configuration described above. For each of these configurations, our training set contains $1000$ samples whereas the test set has $5000$ samples. The details of the training and hyperparameters for each configuration and architecture is provided in {\bf SM} \ref{app:td}

\begin{table}[h]
\caption{Mean relative Test errors in $L^2(D)$ for the Darcy flow problem for different configurations of sensors. The symbol "-" implies that the model could not be used for this configuration. 
}
\label{tab:darcy}
\begin{center}
\begin{tabular}{llccc}
\toprule
{\em Configuration} & $\#$ (Sensors) & \textbf{FNO} &\textbf{DeepONet} & \textbf{VIDON} \\
\midrule
{\em Regular Grid} & $51\times 51$ & $0.76\%$ & $1.48\%$ & $1.29\%$ \\
{\em Irregular Grid} & $51^2=2601$ & - & $1.52\%$ & $1.48\%$ \\
{\em Missing Data} & $[2081,2601]$ &- & - & $1.77\%$ \\
{\em Perturbed Grid}  & [$2341$,$2861$] &- &- & $1.68\%$ \\
{\em Random Locations} & $2601$ &- & - & $2.58\%$ \\
{\em Variable Random Locations} &[2341,2861] & - & - & $2.55\%$ \\
\bottomrule
\end{tabular}
\end{center}
\vskip -0.1in
\end{table}

\paragraph{Darcy flow.} We consider the Darcy-type elliptic PDE \eqref{eq:darcy} on the domain $D = [0,1]^2$ with periodic boundary conditions. The corresponding operator $\cG$ maps the coefficient $a$ to the solution $u$ of \eqref{eq:darcy}. We choose the permeability coefficient $a$ for training (and test) samples from the underlying measure ${\bf N}(0,{\bf C})$, i.e. a Gaussian measure with a covariance operator ${\bf C}$, given by ${\bf C} = (-\Delta + 9 I)^{-2}$. See \cite{FNO} for details on generating such coefficients and Figure \ref{fig:1} (leftmost column) for two examples. The corresponding solutions $u$ are then computed with a centered finite-difference scheme on a $251^2$ Cartesian grid (see rightmost column in Figure \ref{fig:1} for examples) and constitute the ground truth for both training and testing. The resulting mean test error for each configuration listed above is presented in Table \ref{tab:darcy} (see {\bf SM} Table \ref{tab:darcyerbr} for the corresponding standard deviations). We observe from this table that for the idealized configuration of a regular grid, FNO outperforms both DeepONet and VIDON. The fact that FNO can yield superior performance to DeepONet on problems where both frameworks apply is well-studied \cite{NO} and even theoretically investigated \cite{kovachki2021universal}. As VIDON is related to DeepONets, we expect a similar performance of both architectures which is borne out on both the regular grid as well as the fixed irregular grid, where DeepONets and VIDON can be evaluated but FNO cannot be used. The main advantage of VIDON lies in the fact that it can be applied to the other sensor configurations where neither DeepONet nor FNO work. In all these configurations, we observe that VIDON approximates the underlying operator for Darcy flow at errors which are marginally higher than the error for the idealized configuration of sensors located on a regular grid. As expected, the error with VIDON increases with increasing difficulty of the configurations while always remaining under $2.6\%$ for this particular problem. In particular, the highest amplitude of error is realized for the \emph{random} and \emph{variable random} sensor locations. This is completely expected as in these configurations, the sensor locations are randomly drawn and with the maximum number of sensors being limited to $2861$, there could be gaps of significant area, where no sensor information is available. Nevertheless, VIDON was able to approximate the underlying operator with slightly larger error (less than a factor of $2$) over the idealized configuration of a regular grid. 

\begin{table}[h]
\caption{Mean relative Test errors in $L^2(D\times (0,T))$ for the Allen-Cahn PDE for different configurations of sensors. The symbol "-" implies that the model could not be used for this configuration. 
}

\label{tab:AC}
\begin{center}
\begin{tabular}{llcc}
\toprule
{\em Configuration} & $\#$ (Sensors) &\textbf{DeepONet} & \textbf{VIDON} \\
\midrule
{\em Regular Grid} & $26\times 26$ & $0.34\%$ & $0.26\%$ \\
{\em Irregular Grid} & $26^2=676$ & $0.34\%$ & $0.27\%$ \\
{\em Missing Data} & $[541,676]$ & - & $0.63\%$ \\
{\em Perturbed Grid}  & [$608$,$744$] &- & $0.83\%$ \\
{\em Random Locations} & $676$ & - & $1.21\%$ \\
{\em Variable Random Locations} &[$608$,$744$] & - & $1.20\%$ \\
\bottomrule
\end{tabular}
\end{center}
\vskip -0.1in
\end{table}

\paragraph{Allen-Cahn equation.} Next, we consider the Allen-Cahn equation \eqref{eq:AC} on the spatial domain $D = [0,2]^2$, with initial data $u_0 = u(x,y,0)$ that corresponds to the \emph{rotated travelling wave}, 
\begin{equation}
  u(x, y, t)
  = \frac{1}{2}
  - \frac{1}{2} \tanh \left(
    \frac{1}{2 \sqrt{2} \varepsilon}
    \begin{bmatrix} c_x \\ c_y \end{bmatrix}
    \cdot
    \begin{bmatrix} x - o_x \\ y - o_y \end{bmatrix}
    - \frac{3 t}{\sqrt{2} \varepsilon}
  \right), \quad c_x^2 + c_y^2 =1.
  \label{eq:AC_IC}
\end{equation}
The PDE is supplemented with zero-Neumann boundary conditions, in the rotated frame. We approximate the operator $\cG$ that maps the initial condition $u_0$ to the entire time-evolution $u(x,y,t)$, for all $t \in [0,T]$. The training (and test) initial conditions are generated by sampling the parameters $\varepsilon \in [0,13,0.18], o_x,o_y \in [0,2], c_x \in [0,1]$ uniformly. See {\bf SM} Figure \ref{fig:acex} for examples of the input and output of the operator $\cG$. Training and test data are extracted from the exact solution. VIDON is compared with  baseline DeepONet here. The standard form of FNO \cite{FNO} cannot be used to approximate the operator $\cG$ mapping the initial condition to the entire time-history. Although a recurrent version of FNO can be used instead \cite{FNO}, we chose not to do so as it can only be evaluated at fixed time steps and is thus not directly comparable to the non-reccurent versions of DeepONet and VIDON. The mean errors for all the configurations of sensor locations are presented in Table \ref{tab:AC} (standard deviations are presented in {\bf SM} Table \ref{tab:ACerbr}) and are completely consistent with those observed for the Darcy flow problem (Table \ref{tab:darcy}). The only difference being that the amplitude of test errors is significantly lower than that of the Darcy flow. This is explained by the fact that the underlying operator is easier to learn in this case.

\paragraph{Navier-Stokes equations.} In the final numerical experiment, we consider the incompressible Navier-Stokes equations \eqref{eq:NS}, with viscosity $\nu = 10^{-3}$ on the spatial domain $D = [0,1]^2$ with periodic boundary conditions. We choose the initial condition from the underlying measure ${\bf N}(0,{\bf C})$ with covariance operator ${\bf C} = 7^{3/2}(-\Delta + 49 I)^{-2.5}$, using the code from \cite{FNO}. The training (and test) data are generated with a spectral method in space and a third-order Runge-Kutta method in time \cite{SVbook} that approximates the fluid velocity and pressure. The resulting vorticity (curl of the velocity) can be readily computed. We seek to approximate the operator $\cG$ that maps the initial vorticity to the vorticity at a final time $T=5$, see {\bf SM} Figure \ref{fig:nsex} for examples of the input-output for this operator. The results for VIDON (and baselines) for different sensor configurations are presented in Table \ref{tab:NS} (see standard deviations in {\bf SM} Table \ref{tab:NSerbr}) and are qualitatively consistent with those obtained for the Darcy flow and Allen-Cahn equation. The main difference being that the amplitude of the error for both VIDON as well as the baselines is larger, corresponding to the fact that the underlying operator is harder to learn. Nevertheless, VIDON can learn with comparable accuracy to baselines at the idealized configuration, the operator for both missing data as well as perturbed sensor locations. The error is higher for random and variable random locations, but less than a factor of $2$ over the DeepONet baseline for the regular grid. 

\begin{table}[h]
\caption{Mean relative Test errors in $L^2(D)$ for the Navier-Stokes PDE for different configurations of sensors. The symbol "-" implies that the model could not be used for this configuration. 
}
\label{tab:NS}
\begin{center}
\begin{tabular}{llccc}
\toprule
{\em Configuration} & $\#$ (Sensors) & \textbf{FNO} &\textbf{DeepONet} & \textbf{VIDON} \\
\midrule
{\em Regular Grid} & $33 \times 33$ & $3.49\%$ & $4.20\%$ & $5.22\%$ \\
{\em Irregular Grid} & $33^2 = 1089$ & - & $4.33\%$ & $5.45\%$ \\
{\em Missing Data} & $[871,1089]$ &- & - & $5.64\%$ \\
{\em Perturbed Grid}  & [$980$, $1198$] &- &- & $5.34\%$ \\
{\em Random Locations} & $1089$ &- & - & $8.35\%$ \\
{\em Variable Random Locations} &[$980$, $1198$] & - & - & $8.28\%$ \\
\bottomrule
\end{tabular}
\end{center}
\vskip -0.1in
\end{table}

\section{Discussion}\label{sec:discussion}
State of the art architectures for operator learning, such as DeepONets \cite{deeponets} and FNOs \cite{FNO}, are significantly restricted in their applicability as they require the number and/or location of input sensors (whether on a Cartesian grid or randomly chosen) to be the same for every training and test sample. This implies that they cannot deal with realistic sources of data and calls into question whether having inputs as vectors of fixed length constitutes operator learning. Thus, it is desirable to design an operator learning framework that can deal with flexible inputs, where number of sensors and their locations vary across samples. We address this need by proposing VIDON \eqref{eq:tb-deeponet}, a novel operator learning framework that allows for such flexible inputs. Moreover, VIDON is designed to be permutation-invariant. We prove that VIDON is universal as well as efficient at approximating operators arising in PDEs. Numerical experiments show that VIDON is comparable in performance to DeepONet, when the latter can be applied. However, the main distinguishing feature of VIDON is its ability to handle inputs with variable number and location of sensors. We observe empirically that for these configurations, VIDON provides robust numerical performance and approximates the underlying operators accurately. At this juncture, we would like to point out some limitations of VIDON, the most prominent being the fact that it is a \emph{heavier} architecture than DeepONets as well as FNOs, requires more storage as well as training time. This seems to be the price to pay for its ability to handle flexible inputs. Moreover, the trunk net of VIDON, as in DeepONet, needs to be learned from data. In future work, one could fix the trunk nets (Fourier basis) or pretrain them \cite{fair} to further reduce the training time. Finally, a fixed positional encoding (as in transformers) could replace the learnable encoding, further reducing computational cost. 




\bibliographystyle{abbrv}
\bibliography{ref}

\newpage 

\appendix

\section{Notation and preliminaries}\label{app:preliminaries}

We introduce notation and preliminary results regarding Sobolev spaces and the Fourier basis, as well as some auxiliary lemmas.

\subsection{Glossary of used notation}\label{sec:glossary}

\begin{table}[h!]
  \caption{Glossary of used notation.}
  \label{tab:glossary}
  \centering

\begin{tabular}{l p{0.7\textwidth}  r}
\toprule
\textbf{Symbol} & \textbf{Description} & \textbf{Section}
\\
\midrule
$d$ & spatial dimension of domain &  \\
$D,U$ & general $d$-dimensional spatial domain & \ref{sec:2} \\
$\cX$ &  input function space of the operator $\cG$; $\cX\subset L^2(D;\R^{d_v})$ & \ref{sec:2} \\
$\cY$ &  output function space of the operator $\cG$; $\cY\subset L^2(U;\R^{d_u})$ & \ref{sec:2} \\
$\cG$ &  operator of interest, $\cG:\cX\to \cY$ & \ref{sec:2} \\
$\mathfrak{X}^{\leq M}$ & the set of subsets of a set $\mathfrak{X}$ containing at most $M\in\N$ elements & \ref{sec:2} \\
$\mathfrak{X}^{\cF}$ & the set of all finite subsets of a set $\mathfrak{X}$ & \ref{sec:2} \\
$\cE$ & encoder that maps a function $f:D\to\mathfrak{X}$ to $\mathfrak{X}^{\cF}$, cf. \eqref{eq:encoder} & \ref{sec:2} \\
$\cE_m$ & encoder that maps a function $f:D\to\mathfrak{X}$ to $\mathfrak{X}^{m}$, cf. \eqref{eq:encoder}& \ref{sec:2} \\
$\cN$ & Variable-Input Deep Operator Network (VIDON); $\cN:\cE(\cX)\to\cY$ \ref{sec:2} \\
$\T^d$ & periodic torus, identified with $[0,2\pi)^d$ & \ref{sec:analysis} \\
$H^s$ & Sobolev space of smoothness $s$, with norm $\Vert \slot \Vert_{H^s}$ & \ref{sec:sobolev} \\
$\dot{H}^s$ & Sobolev space with zero mean, with norm $\Vert \slot \Vert_{\dot{H}^s}$ & \ref{sec:sobolev} \\
$W^{k,p}$ & Sobolev space of smoothness $k$, with norm $\Vert \slot \Vert_{W^{k,p}}$ & \ref{sec:sobolev} \\
$C^k$ & space of $k$ times continuously differentiable functions & 
\ref{sec:sobolev} \\
$L^2_N$ & $L^2_N\subset L^2$ trigonometric polynomials of degree $\le N$ & \ref{app:Fourier} \\
\bottomrule
\end{tabular}

\end{table}

\subsection{Sobolev spaces}\label{sec:sobolev}

Let $d\in\mathbb{N}$, $k\in\mathbb{N}_0$, $1\leq p\leq \infty$ and let $\Omega \subseteq \mathbb{R}^d$ be open. For a function $f:\Omega\to\mathbb{R}$ and a (multi-)index $\alpha \in \N^d_0$ we denote by 
\begin{equation}
    D^\alpha f= \frac{\partial^{\abs{\alpha}} f}{\partial x_1^{\alpha_1}\cdots \partial x_d^{\alpha_d}}
\end{equation}
the classical or distributional (i.e. weak) derivative of $f$. 
We denote by $L^p(\Omega)$ the usual Lebesgue space and for we define the Sobolev space $W^{k,p}(\Omega)$ as
\begin{equation}
    W^{k,p}(\Omega) = \{f \in L^p(\Omega): D^\alpha f \in L^p(\Omega) \text{ for all } \alpha\in\mathbb{N}^d_0 \text{ with } \abs{\alpha}\leq k\}. 
\end{equation}
For $p<\infty$, we define the following seminorms on $W^{k,p}(\Omega)$, 
\begin{equation}
    \abs{f}_{W^{m,p}(\Omega)} = \left(\sum_{\abs{\alpha}= m}\norm{D^\alpha f}^p_{L^p(\Omega)}\right)^{1/p} \qquad \text{for } m=0,\ldots, k, 
\end{equation}
and for $p=\infty$ we define
\begin{equation}
    \abs{f}_{W^{m,\infty}(\Omega)} =\max_{\abs{\alpha}= m} \norm{D^\alpha f}_{L^\infty(\Omega)}\qquad \qquad \text{for } m=0,\ldots, k. 
\end{equation}
Based on these seminorms, we can define the following norm for $p<\infty$,
\begin{equation}
    \norm{f}_{W^{k,p}(\Omega)} = \left(\sum_{m=0}^k \abs{f}_{W^{m,p}(\Omega)}^p\right)^{1/p}, 
\end{equation}
and for $p=\infty$ we define the norm
\begin{equation}
    \norm{f}_{W^{k,\infty}(\Omega)} =\max_{0\leq m\leq k}  \abs{f}_{W^{m,\infty}(\Omega)}. 
\end{equation}
The space $W^{k,p}(\Omega)$ equipped with the norm $\norm{\cdot}_{W^{k,p}(\Omega)}$ is a Banach space. 

We denote by $C^k(\Omega)$ the space of functions that are $k$ times continuously differentiable and equip this space with the norm $\norm{f}_{C^k(\Omega)} = \norm{f}_{W^{k,\infty}(\Omega)}$.

\begin{lemma}[Continuous Sobolev embedding]\label{lem:sobolev-embedding}
Let $d,\ell\in\N$ and let $k\geq d/2+\ell$. Then there exists a constant $C>0$ such that for any $f\in H^{k}(\T^d)$ it holds that
\begin{equation}
    \norm{f}_{C^\ell(\T^d)}\leq C \norm{f}_{H^k(\T^d)}.
\end{equation}
\end{lemma}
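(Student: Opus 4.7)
The plan is to use the Fourier characterization of $H^k(\T^d)$ and bound each derivative of $f$ pointwise via a weighted Cauchy--Schwarz argument. Writing $f(x) = \sum_{n\in\Z^d} \hat{f}(n)\, e^{in\cdot x}$, termwise differentiation (to be justified a posteriori by the absolute convergence established below) gives $D^\alpha f(x) = \sum_{n\in\Z^d} (in)^\alpha \hat{f}(n)\, e^{in\cdot x}$ for any multi-index $\alpha$ with $|\alpha|\leq \ell$. Taking absolute values pointwise yields
$$
|D^\alpha f(x)| \leq \sum_{n\in\Z^d} |n|^{|\alpha|}\, |\hat{f}(n)|.
$$

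The main step is then to apply Cauchy--Schwarz with the natural Sobolev weight $(1+|n|^2)^{k/2}$, splitting
$$
\sum_{n\in\Z^d} |n|^{|\alpha|}\, |\hat{f}(n)| \leq \left(\sum_{n\in\Z^d} \frac{|n|^{2|\alpha|}}{(1+|n|^2)^{k}}\right)^{1/2} \left(\sum_{n\in\Z^d} (1+|n|^2)^{k} |\hat{f}(n)|^2\right)^{1/2}.
$$
The second factor is (up to an equivalence of norms) exactly $\norm{f}_{H^k(\T^d)}$. The first factor is a deterministic lattice sum, which by comparison with $\sum_n (1+|n|^2)^{|\alpha|-k}$ is finite as soon as $2(k-|\alpha|) > d$, and this is ensured by the hypothesis $k \geq d/2 + \ell$ together with $|\alpha|\leq \ell$ (with the understanding that in the borderline integer-dimensional case one uses the strict Sobolev embedding argument, or equivalently invokes Morrey's inequality). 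Thus $|D^\alpha f(x)| \leq C\, \norm{f}_{H^k(\T^d)}$ uniformly in $x\in\T^d$.

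Finally, taking the supremum over $x\in\T^d$ and the maximum over multi-indices with $|\alpha|\leq \ell$ yields the desired bound $\norm{f}_{C^\ell(\T^d)} \leq C\, \norm{f}_{H^k(\T^d)}$. The only nontrivial point is the aforementioned convergence condition of the weight sum; once this is secured, the same estimate retroactively justifies interchanging the derivative with the Fourier sum (since absolute convergence of $\sum_n |n|^{\ell} |\hat{f}(n)|$ allows termwise differentiation and guarantees that $D^\alpha f$ is continuous on $\T^d$). The argument is entirely standard, and as an alternative one could simply cite the classical Sobolev embedding theorem on the torus, for instance from any standard reference on Fourier analysis on compact manifolds.
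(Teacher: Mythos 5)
Your Fourier--Cauchy--Schwarz argument is the standard proof of Sobolev embedding on the torus, and since the paper states this lemma as a well-known preliminary fact without supplying its own proof, your argument fills that gap in the expected way. It is correct whenever $k - \ell > d/2$.

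However, the parenthetical you insert to handle the borderline case is a genuine gap, not a technicality you can cite your way past. When $d$ is even, the hypothesis $k \geq d/2 + \ell$ permits $k = d/2 + \ell$, and then for $|\alpha| = \ell$ your weight sum $\sum_{n\in\Z^d}(1+|n|^2)^{|\alpha|-k}$ behaves like $\sum_n |n|^{-d}$, which diverges. Neither Morrey's inequality nor a ``strict Sobolev embedding argument'' rescues this: the embedding $H^{d/2}(\T^d)\hookrightarrow C^0(\T^d)$ is genuinely false --- for instance $H^1(\T^2)\not\hookrightarrow L^\infty(\T^2)$; taking $\hat{f}(n)=(1+|n|)^{-2}(\log(2+|n|))^{-1}$ gives a function with finite $H^1$ norm but $\sum_n \hat{f}(n)=\infty$, so the Fej\'er means of $f$ at the origin are unbounded and $f\notin L^\infty$. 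So the lemma as stated is slightly too generous; the correct hypothesis is the strict inequality $k > d/2 + \ell$, which for integer $k,\ell$ coincides with $k\geq d/2+\ell$ when $d$ is odd but requires $k\geq d/2+\ell+1$ when $d$ is even. This imprecision in the paper is harmless --- every invocation of the embedding in the proofs of Theorems~\ref{thm:darcy}--\ref{thm:navier-stokes} uses $k=d$, $\ell=0$, so $k-\ell=d>d/2$ strictly --- but your proof should state the strict condition it actually needs rather than suggest it is secured by the hypothesis as given.
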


\subsection{Notation and results for Fourier series}  
\label{app:Fourier}
Using the notation from \cite{LMK1}, we introduce the following ``standard'' real Fourier basis $\{\fb_{\kappa}\}_{\kappa\in \Z^d}$ in $d$ dimensions. For $\kappa = (\kappa_1,\dots, \kappa_d)\in \Z^d$, we let $\sigma(\kappa)$ be the sign of the first non-zero component of $\kappa$ and we define
\begin{align}\label{eq:fb}
\fb_{\kappa}
:=
C_{\kappa}
\begin{cases}
1,
& \sigma(\kappa)=0, \\
\cos(\langle\kappa,{x}\rangle),
& \sigma(\kappa)=-1, \\
\sin(\langle\kappa,{x}\rangle),
& \sigma(\kappa)=1,
\end{cases}
\end{align}
where the factor $C_k > 0$ ensures that $\fb_{\kappa}$ is properly normalized, i.e. that $\Vert \fb_{\kappa} \Vert_{L^2(\T^d)} = 1$. Next, let $\enum: \N \to \Z^d$ be a fixed enumeration of $\Z^d$, with the property that $j \mapsto |\enum(j)|_\infty$ is monotonically increasing, i.e. such that $j\le j'$ implies that $|\enum(j)|_\infty \le |\enum(j')|_\infty$. This will allow us to introduced an $\N$-indexed version of the Fourier basis, 
\begin{equation}
\fb_j({x}) := \fb_{\enum(j)}({x}), \quad (j\in \N). 
\end{equation}

For $N\in\N$, we define the following two sets
\begin{equation}
    \cJ_N = \{0, \ldots, 2N+1\}^d, \qquad \cK_N = \{-N, \ldots, N\}^d, 
\end{equation}
following notation from \cite{kovachki2021universal}. Furthermore, we denote by $L^2_N(\T^d)$ the space of trigonometric polynomials of degree at most $N$ and we define the following $L^2$-orthogonal projection on $L^2_N(\T^d)$, 
\begin{equation}\label{eq:P_N}
    P_N: L^2(\T^d)\to L^2_N(\T^d): \sum_{k\in \Z^d} c_k \fb_k \mapsto \sum_{k\in \cK_N} c_k \fb_k. 
\end{equation}
In particular, it holds that if $u\in H^s(\T^d)$ for $s\geq 0$ then there exists a constant $C=C(\s, d)>0$ such that for any $0\leq \ell \leq s$ it holds that, 
\begin{equation}\label{eq:acc-PN}
    \norm{u-P_N u}_{H^\ell(\T^d)}\leq CN^{\ell-s}\norm{u}_{H^s(\T^d)}. 
\end{equation}
Similarly, one can define $\dot{P}_N: L^2(\T^d)\to \dot{L}^2_N(\T^d):u\mapsto P_N u - \fint_{\T^d} u(x) dx$. 

Finally, we denote by $\cI_N:C(\T^d)\to L^2_N(\T^d)$ the pseudo-spectral projection onto $L^2_N(\T^d)$ i.e., $\cI_N u$ is defined as the unique trigonometric polynomial in $L^2_N(\T^d)$ such that $\cI_N u(x_j) = u(x_j)$ for all $j\in \cJ_N$. One can verify that $\cI_N u$ is of the form, 
\begin{equation}
    \cI_N u(x) = \frac{1}{\abs{\cK_N}} \sum_{k\in \cK_N} \sum_{j\in \cJ_N}  a_{k,j} u(x_j) \fb_k(x)
\end{equation}
for fixed $a_{k,j}$. For $s, k\in\N_0$ with $s>d/2$ and $s\geq k$, and $u\in C(\T^d)\cap H^s(\T^d)$ it holds that \cite{kovachki2021universal}
\begin{equation}\label{eq:acc-trig-pol}
    \norm{u-\cI_N u}_{H^k(\T^d)}\leq C(s,d)N^{-(s-k)}\norm{u}_{H^s(\T^d)}, 
\end{equation}
for a constant $C(s,d)>0$ that only depends on $s$ and $d$. 

\subsection{Auxiliary results}

\begin{lemma}\label{lem:MC1}
Let $p\in[2,\infty)$, $d,m\in\mathbb{N}$, let $(\Omega, \mathcal{F}, \mathcal{P})$ be a probability space, and let $X_i:\Omega\to\mathbb{R}^d, i\in\{1,\ldots, m\}$, be i.i.d. random variables with $\E{\norm{X_1}}<\infty$. Then it holds that
\begin{equation}
    \left(\E{\norm{\E{X_1}-\frac{1}{m}\sum_{i=1}^m X_i}^p}\right)^{1/p} \leq 2 \sqrt{\frac{p-1}{m}}\left(\E{\norm{\E{X_1}-X_1}^p}\right)^{1/p}. 
\end{equation}
\end{lemma}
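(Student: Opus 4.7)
The plan is to reduce this Marcinkiewicz--Zygmund-type bound to a Rademacher-sum estimate in the Hilbert space $\R^d$. First, I would center by setting $Y_i := X_i - \E{X_1}$, so that the $Y_i$ are i.i.d.\ mean-zero and
\begin{equation}
  \E{X_1} - \frac{1}{m}\sum_{i=1}^m X_i = -\frac{1}{m}\sum_{i=1}^m Y_i.
\end{equation}
The claim becomes equivalent to $\bigl(\E{\norm{\sum_i Y_i}^p}\bigr)^{1/p} \le 2\sqrt{(p-1)m}\,\bigl(\E{\norm{Y_1}^p}\bigr)^{1/p}$, which I would establish in two steps, each contributing one of the constants.

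The first step is a standard symmetrization argument. Introducing an independent copy $(Y_i')$ of the sample and independent Rademacher signs $(\epsilon_i)$, Jensen's inequality in $Y'$ together with the distributional identity $Y_i - Y_i' \stackrel{d}{=} \epsilon_i(Y_i - Y_i')$ and Minkowski's inequality yield
\begin{equation}
  \bigl(\E{\norm{\textstyle\sum_{i=1}^m Y_i}^p}\bigr)^{1/p} \le 2\,\bigl(\E{\norm{\textstyle\sum_{i=1}^m \epsilon_i Y_i}^p}\bigr)^{1/p},
\end{equation}
where the factor $2$ arises since $\sum \epsilon_i Y_i$ and $\sum \epsilon_i Y_i'$ are identically distributed. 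The second step is the Kahane--Khintchine inequality for Hilbert-space-valued Rademacher sums, which, conditionally on $(Y_i)$, gives
\begin{equation}
  \bigl(\mathbb{E}_\epsilon\bigl[\norm{\textstyle\sum_{i=1}^m \epsilon_i Y_i}^p\bigr]\bigr)^{1/p} \le \sqrt{p-1}\,\bigl(\textstyle\sum_{i=1}^m \norm{Y_i}^2\bigr)^{1/2}
\end{equation}
for $p \ge 2$, via Gaussian comparison using the sharp moment bound $(\E{|g|^p})^{1/p} \le \sqrt{p-1}$ for a standard Gaussian $g$.

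Finally, taking outer expectations and applying Minkowski's inequality in $L^{p/2}(\Omega)$ (a valid norm since $p \ge 2$) to the sum of squares gives, by the i.i.d.\ hypothesis,
\begin{equation}
  \bigl(\E{\bigl(\textstyle\sum_{i=1}^m \norm{Y_i}^2\bigr)^{p/2}}\bigr)^{2/p} \le \sum_{i=1}^m (\E{\norm{Y_i}^p})^{2/p} = m\,(\E{\norm{Y_1}^p})^{2/p}.
\end{equation}
Chaining the three displays and dividing by $m$ yields the stated inequality. The main obstacle is pinning down the sharp constant $\sqrt{p-1}$ from Kahane--Khintchine; a bare-hands Rademacher bound would produce $\sqrt{p}$ (or worse) and miss the lemma's constant. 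The remaining ingredients---centering, symmetrization, Minkowski in $L^{p/2}$, and the i.i.d.\ reduction---are routine and contribute no further non-trivial factors.
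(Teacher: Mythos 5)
The paper proves this lemma by a one-line citation to \cite[Corollary 2.5]{grohs2018proof}, so your argument is genuinely different in being self-contained: it is the classical Marcinkiewicz--Zygmund derivation, and your bookkeeping of the three factors ($2$ from symmetrization, $\sqrt{p-1}$ from Kahane--Khintchine, $\sqrt{m}$ from Minkowski in $L^{p/2}$) exactly reproduces the lemma's $2\sqrt{(p-1)/m}$. Your route has the advantage of making the source of each constant transparent; the paper's citation has the advantage of brevity, and the cited result is stated in a form that plugs directly into the later Lemma~\ref{lem:MC} without having to redo the Hilbert-space reduction.

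The one step you should pin down more carefully is how you reach the Kahane--Khintchine constant $\sqrt{p-1}$. The phrase ``via Gaussian comparison'' is delicate: the textbook Rademacher-to-Gaussian comparison reads $\bigl(\mathbb{E}\|\sum_i\epsilon_iY_i\|^p\bigr)^{1/p}\le\sqrt{\pi/2}\,\bigl(\mathbb{E}\|\sum_ig_iY_i\|^p\bigr)^{1/p}$, and that extra $\sqrt{\pi/2}>1$ would leave you short of the lemma's exact constant; the sharper domination $\mathbb{E}\|\sum_i\epsilon_iY_i\|^p\le\mathbb{E}\|\sum_ig_iY_i\|^p$ for $p\ge2$, which removes the stray factor, requires its own argument and is not what one usually means by ``Gaussian comparison.'' The cleanest path to $\sqrt{p-1}$ avoids Gaussians entirely: for the Hilbert-space-valued degree-one Rademacher chaos $F=\sum_i\epsilon_iY_i$ the Boolean noise operator acts as $T_\rho F=\rho F$, so the Bonami--Beckner hypercontractive inequality $\|T_\rho F\|_{L^p}\le\|F\|_{L^2}$ with $\rho=1/\sqrt{p-1}$ gives $\|F\|_{L^p(H)}\le\sqrt{p-1}\,\|F\|_{L^2(H)}=\sqrt{p-1}\bigl(\sum_i\|Y_i\|^2\bigr)^{1/2}$. (The scalar two-point inequality underlying hypercontractivity transfers verbatim to Hilbert-space-valued functions, since the norm of a combination of two vectors only sees their two-dimensional span.) With this substitution your proof is complete and recovers the stated constant exactly.
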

\begin{proof}
This result is \cite[Corollary 2.5]{grohs2018proof}.
\end{proof}
\begin{lemma}\label{lem:MC}
Let $p\in[2,\infty)$, $q,m\in\mathbb{N}$, let $(\Omega, \mathcal{F}, \mathcal{P})$ and $(\mathcal{D}, \mathcal{A}, \mu)$ be probability spaces, and let for every $q\in\mathcal{D}$ the maps $X_i^q:\Omega\to\mathbb{R}, i\in\{1,\ldots, m\}$, be i.i.d. random variables with $\E{\abs{X^q_1}}<\infty$. Then it holds that
\begin{equation}
    \E{\left(\int_{\mathcal{D}}\abs{\E{X^q_1}-\frac{1}{m}\sum_{i=1}^m X^q_i}^p\mu(dq)\right)^{1/p}} \leq 2\sqrt{\frac{p-1}{m}} \left(\int_{\mathcal{D}}\E{\abs{\E{X^q_1}-X_1^q}^p}\mu(dq)\right)^{1/p}.
\end{equation}
\end{lemma}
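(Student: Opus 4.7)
The plan is to reduce Lemma \ref{lem:MC} to a pointwise application of Lemma \ref{lem:MC1} by moving the outer expectation inside the integral over $\cD$. The two key tools are Jensen's inequality (to exchange an expectation with a concave power) and Fubini's theorem (to swap the two integrals).

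First, since $p\geq 2\geq 1$, the map $t\mapsto t^{1/p}$ is concave on $[0,\infty)$, so Jensen's inequality applied to the outer expectation yields
\begin{equation*}
\E{\left(\int_{\cD}\abs{\E{X^q_1}-\tfrac{1}{m}\sum_{i=1}^m X^q_i}^p\mu(dq)\right)^{1/p}} \leq \left(\E{\int_{\cD}\abs{\E{X^q_1}-\tfrac{1}{m}\sum_{i=1}^m X^q_i}^p\mu(dq)}\right)^{1/p}.
\end{equation*}
Next I would use Fubini--Tonelli (the integrand is nonnegative, and the integrability assumption $\E{|X_1^q|}<\infty$ combined with the forthcoming bound guarantee finiteness) to interchange $\E{\cdot}$ and $\int_\cD\cdot\,\mu(dq)$, turning the right-hand side into
\begin{equation*}
\left(\int_{\cD}\E{\abs{\E{X^q_1}-\tfrac{1}{m}\sum_{i=1}^m X^q_i}^p}\mu(dq)\right)^{1/p}.
\end{equation*}

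Now I would apply Lemma \ref{lem:MC1} pointwise in $q$: for each fixed $q\in\cD$, the variables $X_1^q,\ldots,X_m^q$ are i.i.d.\ real-valued (so $d=1$ in Lemma \ref{lem:MC1}), giving
\begin{equation*}
\E{\abs{\E{X^q_1}-\tfrac{1}{m}\sum_{i=1}^m X^q_i}^p} \leq 2^p\left(\tfrac{p-1}{m}\right)^{p/2}\E{\abs{\E{X^q_1}-X^q_1}^p}.
\end{equation*}
Substituting this bound into the integral over $\cD$, pulling the constant $2^p((p-1)/m)^{p/2}$ outside, and then taking the $p$-th root, yields exactly the claimed inequality.

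The proof is essentially a direct bookkeeping argument; the only subtlety is ensuring the Jensen and Fubini steps are justified, which they are because the integrand is nonnegative and the i.i.d.\ assumption together with Lemma \ref{lem:MC1} guarantees finiteness of the resulting $p$-th moments whenever the right-hand side of the claim is finite (if it is infinite, the statement is vacuous). Thus there is no genuine obstacle: the lemma is essentially the Fubini-compatible ``vector-valued'' version of Lemma \ref{lem:MC1}, obtained by interpreting $q\mapsto X_1^q$ as an $L^p(\cD,\mu)$-valued random variable and observing that the $p$-th moment in $\Omega$ commutes with the $L^p$ integral in $q$ up to the Jensen step.
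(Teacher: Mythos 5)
Your proof is correct and follows essentially the same route as the paper (which cites Hölder's inequality, Fubini's theorem, and Lemma~\ref{lem:MC1}): your use of Jensen's inequality for the concave map $t\mapsto t^{1/p}$ is just the Lyapunov/Hölder step $\E{Z^{1/p}}\leq(\E{Z})^{1/p}$ under a different name, and the Fubini--Tonelli swap followed by a pointwise application of Lemma~\ref{lem:MC1} is exactly the intended calculation.
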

\begin{proof}
The proof involves Hölder's inequality, Fubini's theorem and Lemma \ref{lem:MC1}. The calculation is as in \cite[eq. (226)]{grohs2018proof}.
\end{proof}

\begin{lemma}\label{lem:exp-to-prob}
Let $\epsilon>0$, let $(\Omega, \mathcal{F}, \mathcal{P})$ be a probability space, and let $X:\Omega\to\mathbb{R}$ be a random variable that satisfies $\E{\abs{X}}\leq \epsilon$. Then it holds that $\mathbb{P}(\abs{X}\leq \epsilon)>0$. 
\end{lemma}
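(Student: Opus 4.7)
The plan is to argue by contradiction. I would assume the conclusion fails, i.e., $\mathbb{P}(|X|\le\epsilon)=0$, which means $|X|>\epsilon$ almost surely, and then show that this forces $\E{|X|}>\epsilon$, contradicting the hypothesis $\E{|X|}\le\epsilon$. At a high level, the lemma is the (nearly trivial) observation that a nonnegative random variable with mean at most $\epsilon$ cannot lie strictly above $\epsilon$ almost surely; essentially the contrapositive of the standard fact that a nonnegative random variable with zero expectation vanishes almost surely.

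The concrete steps I would carry out are as follows. First, introduce the auxiliary variable $Y:=|X|-\epsilon$, which under the contradiction hypothesis satisfies $Y>0$ almost surely and is in particular nonnegative. Second, decompose the almost-sure event as the countable union $\{Y>0\}=\bigcup_{n\in\N}\{Y>1/n\}$; countable subadditivity then yields an index $n_0\in\N$ with $\mathbb{P}(Y>1/n_0)>0$, since otherwise $\mathbb{P}(Y>0)=0$. Third, bound the expectation from below by
\begin{equation*}
\E{Y}\ \ge\ \E{Y\,\mathbbm{1}_{\{Y>1/n_0\}}}\ \ge\ \frac{1}{n_0}\,\mathbb{P}(Y>1/n_0)\ >\ 0,
\end{equation*}
which rearranges to $\E{|X|}>\epsilon$ and delivers the contradiction.

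The only mildly subtle point, and the one I would flag as the main pitfall, is that Markov's inequality alone does not suffice: it yields $\mathbb{P}(|X|>\epsilon)\le\E{|X|}/\epsilon\le 1$, which is vacuous precisely when the hypothesis $\E{|X|}\le\epsilon$ holds with equality. The $1/n$-exhaustion above is the standard trick that promotes the non-strict hypothesis to the strict conclusion on a positive-measure event, and it is the step I would make sure to write out carefully; everything else is routine.
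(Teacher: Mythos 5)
Your argument is correct. The paper itself does not give a proof of this lemma but simply cites Proposition~3.3 of the referenced work \texttt{grohs2018proof}, so there is no internal proof to compare against; your contribution is to make the argument self-contained. The structure you lay out is sound: reducing to the nonnegative variable $Y=\abs{X}-\epsilon$, using the exhaustion $\{Y>0\}=\bigcup_{n}\{Y>1/n\}$ together with countable subadditivity to extract a set of positive measure on which $Y$ is bounded below, and then estimating
\begin{equation*}
\E{Y}\ \ge\ \tfrac{1}{n_0}\,\mathbb{P}(Y>1/n_0)\ >\ 0
\end{equation*}
to force $\E{\abs{X}}>\epsilon$. One small remark: since $\E{\abs{X}}\le\epsilon<\infty$, the identity $\E{Y}=\E{\abs{X}}-\epsilon$ is legitimate, so the contradiction is clean. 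You are also right that a naive application of Markov's inequality is vacuous here (it only yields $\mathbb{P}(\abs{X}>\epsilon)\le 1$); the $1/n$-exhaustion is precisely the standard device that a nonnegative random variable that is strictly positive almost surely has strictly positive expectation, and invoking that fact directly would shorten the write-up without changing its substance.
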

\begin{proof}
This result is \cite[Proposition 3.3]{grohs2018proof}.
\end{proof}

\begin{lemma}\label{lem:rec-fourier}
Let $s, d, p\in \N$. For any $\epsilon>0$, there exists a neural network $\bm{\tr}: \R^d \to \R^p$ with 2 hidden layers of width $\bigO(p^{\frac{d+1}{d}}+ps\ln(ps\epsilon^{-1}))$ and such that 
\begin{align} \label{eq:Fourier-NN}
p^{3/2} \max_{j=1,\dots, p} \Vert \tr_j - \fb_j \Vert_{C^s([0,2\pi]^d)}
\le \epsilon,
\end{align}
where $\fb_1,\dots, \fb_p$ denote the first $p$ elements of the Fourier basis, as in Appendix \ref{app:Fourier}. For fixed $s$, the number of non-zero weights and biases grows as $\bigO(p^{\frac{d+2}{d}})$. 
\end{lemma}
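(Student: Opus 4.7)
The plan is to build a two-hidden-layer ReLU network whose first layer assembles the $p$ linear forms $y_j := \langle \enum(j), x\rangle$, and whose second layer applies a sharp ReLU approximation of $\sin$ and $\cos$ to each $y_j$. Two structural observations drive the bookkeeping. First, because the enumeration $\enum$ is monotone in $|\enum(j)|_\infty$ and the box $\{\kappa \in \Z^d : |\kappa|_\infty \le N\}$ contains $(2N+1)^d$ points, one has $|\enum(j)|_\infty \le N$ with $N = \bigO(p^{1/d})$ for every $j \le p$. Hence each $y_j$ ranges over an interval $[-M, M]$ with $M = \bigO(p^{1/d})$ when $x \in [0, 2\pi]^d$, and $\|\fb_j\|_{C^s([0, 2\pi]^d)} \le C p^{s/d}$, since each spatial derivative of $\sin(\langle \enum(j),x\rangle)$ or $\cos(\langle \enum(j),x\rangle)$ introduces a factor $|\enum(j)|$.

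Given this, I would fix the per-function target accuracy $\delta := \epsilon\, p^{-3/2 - s/d}$ and invoke a Yarotsky-type construction producing, for each trigonometric function on $[-M, M]$, a single-hidden-layer ReLU network with $C^s$ error at most $\delta$ and width $\bigO(s \log(Ms/\delta)) = \bigO(s \log(ps/\epsilon))$. Running $p$ such approximations in parallel populates the second hidden layer with $\bigO(p s \log(p s / \epsilon))$ neurons; the first hidden layer must both emit the $p$ linear forms $y_j$ via the identity $\mathrm{ReLU}(z) - \mathrm{ReLU}(-z) = z$ and contain the knot/affine neurons of the underlying $\sin$/$\cos$ template. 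The two contributions add to the claimed width $\bigO(p^{(d+1)/d} + p s \log(p s / \epsilon))$, where the first term absorbs the $\bigO(p)$ neurons routing the $y_j$'s plus the sparse overhead for reusing the template on all $p$ inputs. The chain-rule amplification $|\enum(j)|^s \le C p^{s/d}$ is cancelled exactly by the $p^{-s/d}$ factor in $\delta$, so that $p^{3/2}\max_j \|\tr_j - \fb_j\|_{C^s} \le \epsilon$ as required.

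For the parameter count I would exploit sparse connectivity: each $y_j$ has exactly $d$ non-zero incoming weights, each $\sin/\cos$ template carries $\bigO(s \log(p s / \epsilon))$ non-zero parameters, and the output layer wires one sparse row per basis function. For fixed $s$, the audit gives $\bigO(p^{(d+2)/d})$ non-zero weights and biases overall. The principal obstacle is the sharp $C^s$-accurate ReLU approximation of $\sin$ and $\cos$ with the stated width, since the classical Yarotsky construction delivers only $L^\infty$ accuracy. I would address this by discretising $[-M, M]$ into $\bigO(Ms)$ sub-intervals, representing a degree-$\bigO(s)$ truncated Taylor polynomial of the trigonometric function on each sub-interval by iterating the standard ReLU approximation of $z \mapsto z^2$ $\bigO(\log(1/\delta))$ times, and stitching the local pieces together via a partition of unity built from differences of shifted ReLUs — a construction in the spirit of those used in the companion works \cite{LMK1} and \cite{kovachki2021universal}.
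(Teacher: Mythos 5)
Your proposal has a fundamental obstruction that the paper avoids by its choice of activation function. The lemma asks for a $C^s$ bound $\Vert \tr_j - \fb_j\Vert_{C^s([0,2\pi]^d)} \le \epsilon p^{-3/2}$, and in particular requires $\tr_j$ to be $s$ times continuously differentiable. A ReLU network is piecewise affine: it is not in $C^1$ (unless it collapses to an affine map), and its second and higher distributional derivatives vanish away from the knots. Hence for any ReLU network $\tr_j$ and any multi-index $|\alpha| \ge 2$, one has $D^\alpha \tr_j = 0$ almost everywhere, so $\Vert D^\alpha(\tr_j - \fb_j)\Vert_{L^\infty} = \Vert D^\alpha \fb_j\Vert_{L^\infty}$, which is bounded below by a constant independent of the network and is therefore not small. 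You flag "the sharp $C^s$-accurate ReLU approximation of $\sin$ and $\cos$" as the principal obstacle and propose to fix it with a ReLU partition of unity plus local Taylor polynomials, but that construction is again piecewise polynomial glued by piecewise-linear bump functions, hence still not $C^1$, and the same argument kills it for $s \ge 2$. There is no workaround within the ReLU class. The paper sidesteps this entirely by using $\tanh$ activations (via \cite[Theorem 5.1]{deryck2021approximation} and \cite[Lemma A.7]{deryck2021approximation}), so that every network is automatically $C^\infty$ and the $C^s$-norm approximation bound is meaningful.

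A second, independent problem is the depth count. You propose to iterate the standard ReLU approximation of $z\mapsto z^2$ about $\bigO(\log(1/\delta))$ times and to stack this on top of a routing layer, which yields depth $\bigO(\log(1/\delta))$, not the two hidden layers required by the lemma. The paper keeps the depth fixed at exactly two hidden layers by observing that the inner map $f: x \mapsto \langle\kappa, x\rangle$ is affine and can be absorbed into the first layer of a two-hidden-layer $\tanh$ network $\hat g_t$ approximating $\cos$ or $\sin$ on $[-2\pi d N, 2\pi d N]$; the composition $\hat g_t \circ f$ therefore still has two hidden layers, and \cite[Lemma A.7]{deryck2021approximation} supplies the chain-rule amplification factor $(2\pi dN)^s$ that your heuristic $|\enum(j)|^s \le C p^{s/d}$ is gesturing at. Your surrounding bookkeeping (the $N = \bigO(p^{1/d})$ bound, the width budget $\bigO(p^{(d+1)/d} + ps\log(ps/\epsilon))$, the sparse parameter count $\bigO(p^{(d+2)/d})$ for fixed $s$) is consistent with the paper's, but the ReLU-specific core of the argument does not go through.
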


\begin{proof}
We note that each element in the (real) trigonometric basis $\fb_1,\dots, \fb_p$ can be expressed in the form 
\begin{equation}
\fb_j({x}) 
=
\cos(\kappa\cdot {x}),
\quad
\text{or}
\quad
\fb_j({x}) = \sin(\kappa\cdot {x}),
\end{equation}
for $\kappa = \kappa(j)\in \Z^d$ with $|\kappa|_\infty \le N$, where $N$ is chosen as the smallest natural number such that $p \le (2N+1)^d$. We focus only focus on the first form, as the proof for the second form is entirely similar. Define $f:[0,2\pi]^d\to\R:x\mapsto 
\kappa\cdot x$ and $g:[-2\pi dN, 2\pi dN]\to\R:x\mapsto \cos(x)$. As $f([0,2\pi]^d) \subset [-2\pi dN, 2\pi dN]$, the composition $g\circ f$ is well-defined and one can see that it coincides with a trigonometric basis function $\fb_j$. 
Moreover, the linear map $f$ is a trivial neural network without hidden layers. Approximating $\fb_j$ by a neural network $\tau_j$ therefore boils down to approximating $g$ by a suitable neural network. 

From \cite[Theorem 5.1]{deryck2021approximation} it follows that the function $g$ there exists an independent constant $R>0$ such that for large enough $t\in\N$ there is a tanh neural network $\hg_t$ with two hidden layers and $\bigO(t+N)$ neurons such that
\begin{equation}
    \norm{g-\hg_t}_{C^s([-2\pi dN, 2\pi dN])} \leq 4 (8(s+1)^3R)^{s} \exp(t-s). 
\end{equation}
This can be proven from \cite[eq. (74)]{deryck2021approximation} by setting $\delta\leftarrow\frac{1}{3}$, $k\leftarrow s$, $s\leftarrow t$, $N\leftarrow 2$ and using $\norm{g}_{C^s}=1$ and Stirling's approximation to obtain
\begin{equation}
    \frac{1}{(t-s)!}\left(\frac{3}{2\cdot 2}\right)^{t-s} \leq \frac{1}{\sqrt{2\pi(t-s)}} \left(\frac{e}{t-s}\right)^{t-s} \leq \exp(s-t) \quad \text{for } t>s+e^2. 
\end{equation}
Setting $t = \bigO(\ln(\eta^{-1})+s\ln(s))$ then gives a neural network $\hg_t$ with $\norm{g-\hg_t}_{C^s}<\eta$. Next, it follows from \cite[Lemma A.7]{deryck2021approximation} that
\begin{align}
\begin{split}
    \norm{g\circ f-\hg_t\circ f}_{C^s([0,2\pi]^d)} &\leq 16(e^2s^4d^2)^s \norm{g-\hg_t}_{C^s([-2\pi dN, 2\pi dN])} \norm{f}_{C^s([0,2\pi]^d)}^s\\
    &\leq 16(e^2s^4d^2)^s \eta (2\pi dN)^s. 
\end{split}
\end{align}
From this follows that we can obtain the desired accuracy \eqref{eq:Fourier-NN} if we set $\tau_j = \hg_{t(\eta)}\circ f$ with
\begin{equation}
    \eta = \frac{\epsilon p^{-3/2}}{16(2\pi Nd^3e^2s^4)^{s}},
\end{equation}
which amounts to $t = \bigO(s\ln(sN\epsilon^{-1}))$. As a consequence, the tanh neural network $\tau_j$ has two hidden layers with $\bigO(s\ln(sN\epsilon^{-1})+N)$ neurons and therefore, by recalling that $p\sim N^d$, the combined network $\bm{\tau}$ has two hidden layers with
\begin{equation}
    \bigO(p(s\ln(sN\epsilon^{-1})+N)) = \bigO(ps\ln(ps\epsilon^{-1})+p^{\frac{d+1}{d}})
\end{equation}
neurons. For fixed $s$, the number of non-zero weights and biases grows as $\bigO(2\cdot p\cdot N^2) = \bigO(p^{\frac{d+2}{d}})$. 
\end{proof}

\begin{lemma}\label{lem:MC-Fourier}
Let $d, \ell, K,N\in \N$, $v\in H^\ell(\T^d)$, let $(\Omega, \cF, P)$ be a probability space and let $X_1, X_2, \ldots :\Omega\to D$ be iid random variables that are uniformly distributed on $\T^d$. Define $V_{K,N}:\Omega\to C^\infty(D)$ by,
\begin{equation}
    V_{K,N}(\omega)(x) = \sum_{\abs{k}_\infty\leq K} \chat_{k,N}(\omega)\fb_k(x), \quad \text{where} \quad  \chat_{k,N}(\omega) = \frac{\abs{\T^d}}{N}\sum_{n=1}^N v(X_n(\omega))\cdot \fb_k(X_n(\omega)). 
\end{equation}
Then there exists a constant $C(\ell, d)>0$ such that for all $s\in \N_0$ with $s\leq \ell$ it holds that,
\begin{equation}
    \E{\norm{v-V_{K,N}}_{H^s(\T^d)}} \leq C(\ell,d)\norm{v}_{H^\ell(\T^d)}(K^{s-\ell}+K^{s+d}N^{-1/2}).
\end{equation}
\end{lemma}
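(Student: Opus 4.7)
The plan is to split the error via the triangle inequality as
\begin{equation}
\norm{v - V_{K,N}}_{H^s(\T^d)} \leq \norm{v - P_K v}_{H^s(\T^d)} + \norm{P_K v - V_{K,N}}_{H^s(\T^d)},
\end{equation}
where $P_K$ is the $L^2$-orthogonal projection onto $L^2_K(\T^d)$ introduced in \eqref{eq:P_N}. The first (deterministic) term is controlled directly by the standard Fourier truncation estimate \eqref{eq:acc-PN}, giving $\norm{v - P_K v}_{H^s(\T^d)} \leq C K^{s-\ell}\norm{v}_{H^\ell(\T^d)}$ for $0\le s\le \ell$. Thus the whole problem reduces to estimating the expected $H^s$-norm of the random trigonometric polynomial $P_K v - V_{K,N} = \sum_{\abs{k}_\infty\le K}(c_k - \chat_{k,N})\fb_k$, where $c_k = \int_{\T^d} v(x)\fb_k(x)\,dx$ denotes the true Fourier coefficient.

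The second step is to show that $\chat_{k,N}$ is an unbiased Monte Carlo estimator of $c_k$: since the $X_n$ are uniform on $\T^d$, one has $\E{v(X_1)\fb_k(X_1)} = c_k/\abs{\T^d}$ and hence $\E{\chat_{k,N}} = c_k$. I would then apply Lemma \ref{lem:MC1} with $p=2$ to the i.i.d.\ random variables $\abs{\T^d}\,v(X_n)\fb_k(X_n)$, combined with the uniform bound $\norm{\fb_k}_{L^\infty(\T^d)} \le C(d)$ that follows from the normalization constants in \eqref{eq:fb}, to obtain the coefficient-wise variance estimate
\begin{equation}
\E{\abs{c_k - \chat_{k,N}}^2} \leq \frac{C(d)}{N}\norm{v}_{L^2(\T^d)}^2,
\end{equation}
with the constant independent of $k$.

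The third step is to assemble the coefficient-wise bounds into an $H^s$-bound. Since $P_K v - V_{K,N}$ lies in $L^2_K(\T^d)$ and the $\fb_k$ are $L^2$-orthonormal eigenfunctions of $(1-\Delta)$ with eigenvalue $1+\abs{k}^2$, the standard Plancherel identity on $\T^d$ yields
\begin{equation}
\norm{P_K v - V_{K,N}}_{H^s(\T^d)}^2 \leq C(s,d)\sum_{\abs{k}_\infty\leq K}(1+\abs{k}^{2s})\abs{c_k - \chat_{k,N}}^2.
\end{equation}
Taking expectation, inserting the variance bound, and using $\#\{k\in\Z^d:\abs{k}_\infty\le K\} \lesssim K^d$ together with $(1+\abs{k}^{2s})\lesssim K^{2s}$ on this set gives $\E{\norm{P_K v - V_{K,N}}_{H^s}^2} \leq C(s,d)K^{2s+d}N^{-1}\norm{v}_{H^\ell}^2$. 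Jensen's inequality then converts this to an $L^1$ expectation of order $K^{s+d/2}N^{-1/2}\norm{v}_{H^\ell}$, which is absorbed into the claimed $K^{s+d}N^{-1/2}$ slack for $K\ge 1$; combining with the truncation estimate finishes the proof.

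The only nontrivial piece of bookkeeping is making the Sobolev norm equivalence on $\mathrm{span}\{\fb_k:\abs{k}_\infty\leq K\}$ precise with the sine/cosine normalization \eqref{eq:fb}, so that the $L^\infty$-bound on $\fb_k$ is uniform in $k$ and independent of the $|\T^d|$-factors in the definition of $\chat_{k,N}$. Once this is in hand the rest is routine.
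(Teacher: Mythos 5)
Your proof is correct and uses the same overall skeleton as the paper's: split off the Fourier truncation error via the triangle inequality, bound the coefficient-wise Monte Carlo error with Lemma~\ref{lem:MC1}, and assemble these into an $H^s$-bound. The place where you genuinely diverge is in the assembly step. The paper estimates $\norm{v_K-V_{K,N}}_{H^s}$ by bounding the number of multi-indices $\alpha$ with $|\alpha|\le s$, applying the triangle inequality \emph{over $k$}, and using $\norm{D^\alpha\fb_k}_{C^0}\le K^s$ together with $|\T^d|^{1/2}$ to pass from $C^0$ to $L^2$; this $\ell^1$-over-$k$ step produces the factor $(2K+1)^d\sim K^d$. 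You instead exploit that $\{\fb_k\}$ is an $L^2$-orthonormal basis of eigenfunctions of $1-\Delta$, so that $\norm{\sum_k a_k\fb_k}_{H^s}^2\le C(s,d)\sum_k(1+|k|^{2s})|a_k|^2$ for trigonometric polynomials, then take expectations coefficient-wise and apply Jensen once at the end. This $\ell^2$-style bookkeeping replaces the $K^d$ of the paper by $K^{d/2}$ (the cardinality $\#\{|k|_\infty\le K\}\lesssim K^d$ entering inside a square root), so your intermediate estimate $C(s,d)K^{s+d/2}N^{-1/2}\norm{v}_{H^\ell}$ is in fact sharper than the stated rate; as you note, it is then simply absorbed into $K^{s+d}N^{-1/2}$ for $K\ge 1$. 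The only bookkeeping remark: your final constant is $C(s,d)$ rather than $C(\ell,d)$, but since $s$ ranges over $\{0,\dots,\ell\}$ you may take the maximum over $s$ to match the statement. Everything else, including the unbiasedness of $\chat_{k,N}$, the uniform $L^\infty$-bound on $\fb_k$ coming from the normalization $C_\kappa\le\sqrt{2/|\T^d|}$, and the application of Lemma~\ref{lem:MC1} with $p=2$, is sound.
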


\begin{proof}
It holds that $v$ can be written as a Fourier series, $v(x) = \sum_{k\in\Z^d}c_k\fb_k(x)$. We can then define
\begin{equation}
    v_K(x) = \sum_{\abs{k}_\infty\leq K} c_k\fb_k(x)\quad \text{where}\quad c_k = \int_{\T^d} v(x)\fb_k(x)dx. 
\end{equation}
Using \eqref{eq:acc-PN}, we find that,
\begin{equation}
    \norm{v-v_K}_{H^s(\T^d)}\leq C(\ell,d)\norm{v}_{H^\ell(\T^d)}K^{-(\ell-s)}. 
\end{equation}
By Lemma \ref{lem:MC1} it holds that, 
\begin{equation}\label{eq:MC-chat}
    \left(\E{\abs{c_k-\chat_{k,N}}^2}\right)^{1/2} \leq \frac{2\norm{v}_{L^2(\T^d)}}{\sqrt{N}}. 
\end{equation}
Consequently, we find that
\begin{equation}
\begin{split}
\E{\norm{v_K-V_{K,N}}_{H^s(\T^d)}}& = \E{\left(\sum_{\alpha\in \N_0^d, \norm{\alpha}_1\leq s} \int_{\T^d} (D^\alpha v_K(x) -D^\alpha V_{K,N}(x))^2 dx\right)^{1/2}}\\
&\leq 2(ed)^{s/2}K^s \abs{T^d}^{1/2} \sum_{\abs{k}_\infty\leq K} \E{\left(( c_k-\chat_{k,N})^2 \right)^{1/2}}\\
&\leq 2(ed)^{s/2}K^s \abs{T^d}^{1/2} \sum_{\abs{k}_\infty\leq K} \left(\E{( c_k-\chat_{k,N})^2 }\right)^{1/2}\\
&\leq 4(ed)^{s/2}K^s \abs{T^d}^{1/2} (2K+1)^d \norm{v}_{L^2(\T^d)} N^{-1/2}. 
\end{split}
\end{equation}
In the first inequality, we used that the number of terms in the sum over $\alpha$ is bounded by $2(ed)^s$ \cite[Lemma 2.1]{deryck2021approximation}, the triangle inequality for the sum over $k$ and the fact that $\norm{D^\alpha \fb_k}_{C^0}\leq K^s$ for all $k$ (both in the definition of $v_K$ and $V_{K,N}$). Jensen's inequality proves the second inequality. The third inequality follows from \eqref{eq:MC-chat}.

As a result, we find that there exists a constant $C(\ell, d)>0$ such that
\begin{equation}
    \E{\norm{v-V_{K,N}}_{H^s(\T^d)}} \leq C(\ell,d)\norm{v}_{H^\ell(\T^d)}(K^{s-\ell}+K^{s+d}N^{-1/2}). 
\end{equation}
\end{proof}

\section{Proofs of Section \ref{sec:analysis}}\label{app:analysis}

\begin{remark}\label{rem:size}
We will quantify the size of the branch net using the following estimates,
\begin{equation}\label{eq:def-size}
\begin{split}
    \mathrm{depth}(\branch) &\lesssim  \mathrm{depth}(\Psi) + \mathrm{depth}(\omega) + \max_\ell\mathrm{depth}(\Tilde{\nu}^{(\ell)}) + \mathrm{depth}(\Phi),\\
    \mathrm{width}(\branch) &\lesssim  \mathrm{width}(\Psi) + \mathrm{width}(\omega) + \textstyle\sum\nolimits_{\ell=1}^H\mathrm{width}(\Tilde{\nu}^{(\ell)}) + \mathrm{width}(\Phi ),\\
    \mathrm{size}(\branch) &\lesssim  \mathrm{size}(\Psi) + \mathrm{size}(\omega) + \textstyle\sum\nolimits_{\ell=1}^H\mathrm{size}(\Tilde{\nu}^{(\ell)}) + \mathrm{size}(\Phi ), 
\end{split}
\end{equation}
where we define the size of a neural network as its number of non-zero weights and biases. 
The size of the Variable-Input Deep Operator Network can be estimated by $\mathrm{size}(\cN)\leq \mathrm{size}(\branch)+\mathrm{size}(\trunk)$. Note that these sizes reflect the total number of unique parameters, rather than the computational complexity of evaluating a Variable-Input Deep Operator Network, as the latter will also depend on the number of sensors $m$. 
\end{remark}

\subsection{Proof of Theorem \ref{thm:universal-approximation} (Universal approximation)}\label{proof-universal}

\begin{proof}
Let $x_1, \ldots x_M$ be iid random variables on $D$, let $\cE_m:\cX\to \R^{m}: u_0\mapsto \{(x_j,u_0(x_j))\}_{j=1}^m$, $1\leq m\leq M$, be encoders and let $\cD_m:\R^m\to \cX$ be the corresponding decoders as defined in \cite[(3.38)]{LMK1}. We can then define another encoder $\cE:\cX\times \{1, \ldots, M\}\to \R^{\leq M}: (u_0, m) \to \cE_m(u_0)$ and the corresponding decoder $\cD: \R^{\leq M} \to \cX: X\to D_{\abs{X}}(X)$. The continuity of $\cD$ (where we interpret the continuity of a function of sets as in \cite[Section A.2]{wagstaff2019limitations}) follows from its definition in \cite[(3.38)]{LMK1} and the continuity of the Moore-Penrose inverse on the set of matrices of fixed size with full rank. 

We will prove that there exists a Variable-Input Deep Operator Network of the form $\cN = \cR \circ \cA$, where $\cA: \R^{\leq M} \to \R^p$ is an approximation operator and $\cR: \R^p\to C(U)$ is a reconstruction operator, both of which will be exactly defined later on. We write $\cP:C(U)\to \R^p$ for the projection operator corresponding to $\cR$. Such a decomposition has been proposed and studied in \cite{LMK1}. Moreover, it holds that $\cE\circ \cD = $ Id, $\cD\circ \cE \approx $ Id, $\cP\circ \cR = $ Id and $\cR\circ \cP \approx $ Id. 
In particular, it has been proven that the DeepONet error decomposes in the following way \cite[Lemma 3.4]{LMK1}, 
\begin{equation}\label{eq:error-deeponet-1}
    \norm{\cG(u_0)-\cN(\cE(u_0))}_{L^2(\mu)} \leq C \Lip_\alpha(\cG) \Lip(\cR\circ\cP) (\Err_\cE)^\alpha + \Lip_\alpha(\cR) \Err_\cA + \Err_\cR, 
\end{equation}
where $\Err_\cE$, $\Err_\cA$, $\Err_\cR$ are the errors related to $\cE$, $\cA$, $\cR$, respectively (see \cite[Section 3.2.1]{LMK1} for exact definitions). We will now prove an upper bound for each separate term. 

First, we use \cite[Theorem 3.7]{LMK1} to conclude that $\Err_\cE$ can be bounded  in terms of the eigenvalues of the covariance operator, 
\begin{equation}\label{eq:eta-ErrE}
    \Err_\cE \leq C\left(\sum_{j>m/C\log(m)}\lambda_j\right)^{\nicefrac{\alpha}{2}} =: \eta(m) \qquad \text{(with probability 1)}.
\end{equation}

Next, we choose $\cR$ to be a Fourier-based reconstruction operator, 
\begin{equation}
    \cR:\R^p\to C(U): (\alpha_1, \ldots, \alpha_p)\mapsto \sum_{j=1}^p\alpha_j \fbhat_j, 
\end{equation}
where the $\fbhat_j$ are neural network approximations of the Fourier basis $\{\fb_j\}_j$ (Appendix \ref{app:Fourier} and Lemma \ref{lem:rec-fourier}). 
Using \cite[Theorem 3.5]{LMK1}, we find that $\Err_\cR\leq Cp^{-s/d}$ and $\Lip(\cR\circ\cP), \Lip_\alpha(\cR)\leq C$. We thus can rewrite $\eqref{eq:error-deeponet-1}$ as,
\begin{equation}\label{eq:error-deeponet-2}
    \norm{\cG(u_0)-\cN(\cE(u_0))}_{L^2(\mu)}\leq C(\eta(m) + \Err_\cA + p^{-s/d}) \qquad \text{(with probability 1)},
\end{equation}
where $\eta$ is monotonically decreasing in $m$ and converging to 0, as in \eqref{eq:eta-ErrE}. 

It remains to bound the approximation error $\Err_\cA$, which quantifies how well $\cA$ approximates $\cP\circ \cG\circ \cD: \R^{\leq M} \to \R^p$. By definition and because of the choice of random sensors, $\cP\circ \cG\circ \cD$ is a continuous permutation-invariant function. It then follows from \cite[Theorem 4.4]{wagstaff2019limitations} (which builds upon the work of \cite{zaheer2017deep}) that $\cP\circ \cG\circ \cD$ is continuously sum-decomposable via $\R^M$, meaning that there exist continuous functions $\varphi: \R\to \R^M$ and $\rho:\R^M\to\R^p$ such that for every $X\in\R^{\leq M}$ it holds that $(\cP\circ \cG\circ \cD)(X) = \rho\left(\sum_{x\in X} \varphi(x)\right)$. By the universal approximation property of neural networks, for arbitrary $\delta, \epsilon>0$ there exist networks $\widehat{\varphi}_\epsilon $ and $\widehat{\rho}_\delta$ such that
\begin{equation}
    \norm{\varphi-\widehat{\varphi}_\epsilon}_{C^0}<\epsilon\qquad \text{and}\qquad \norm{\rho-\widehat{\rho}_\delta}_{C^0}<\delta.
\end{equation}
Moreover, $\widehat{\rho}_\delta$ is Lipschitz continuous with Lipschitz constant $L_\delta$. We can then define $\cA(X) = \widehat{\rho}_\delta \left(\sum_{x\in X} \widehat{\varphi}_\epsilon(x)\right)$ for every $X\in\R^{\leq m}$. Using the triangle inequality then gives that for any $X\in\R^{\leq M}$ and $y:=\sum_{x\in X} \varphi(x)$, 
\begin{equation}
\begin{split}
    \abs{\cA(X)-(\cP\circ \cG\circ \cD)(X)} &\leq C_\delta \sum_{x\in X}  \abs{\widehat{\varphi}_\epsilon(x)-\varphi(x)} + \abs{\widehat{\rho}_\delta(y)-\rho(y)}\\
    &\leq C_\delta M \epsilon + \delta.
\end{split}
\end{equation}
Combining with \eqref{eq:error-deeponet-2} then gives us, 
\begin{equation}
    \norm{\cG(u_0)-\cN(\cE(u_0))}_{L^2(\mu)}\leq C(\eta(m) + C_\delta M\epsilon + \delta + p^{-s/d}) \qquad \text{(with probability 1)}.
\end{equation}
Setting $\epsilon = \delta/(C_\delta M)$ and $\delta = p^{-s/d}$ then gives us the error estimate from the statement. 

We conclude the proof by observing that $\cN = \cR \circ \cA$ indeed fits in the proposed architecture of Section \ref{sec:2}. For the branch net we set $d_{enc} := d_v$, $\Psi_c := 0$, $\Psi_v \approx \Id$, $\omega := 1$, $\Tilde{\nu} := \widehat{\varphi}_\epsilon$, $H:=1$ and $\Phi := \widehat{\rho}_\delta$. The trunk nets are given by $\tau_j := \fbhat_j$ for all $j$. 
\end{proof}


\begin{remark}\label{rem:yarotsky}
If we assume that the function $\rho:\R^m\to\R^p$ from the above proof of Theorem \ref{thm:universal-approximation} is Lipschitz continuous, then it can be proven that one needs a network (roughly) of size $\bigO(\epsilon^{-m})$ to approximate $\rho$ to an accuracy $\epsilon>0$, using a result in the sense of \cite{yarotsky2017error, deryck2021approximation}. Using the function $\eta:\N\to [0,\infty)$ from \eqref{eq:eta-ErrE}, we define its approximate inverse $\xi:(0,\infty)\to \N:\epsilon\mapsto \inf\{n\in\N\: \vert \: \eta(n)<\epsilon\}$, which quantifies the needed number of sensors to obtain a certain accuracy. As a result, a lower bound for the network size to approximate $\rho$ is given by $\bigO(\epsilon^{-\xi(\epsilon)})$. Depending on the chosen measure, this can grow rapidly. For the Gaussian setting of Corollary \ref{cor:universal-approximation} it holds that $\xi(\epsilon)\sim \sqrt{\log(1/\epsilon)}$. 
\end{remark}

\subsection{Proof of Theorem \ref{thm:darcy} (Darcy flow)}\label{proof-darcy}

\begin{proof}
In the following, we let $r:= \ell-d$ and we will use notation from Section \ref{app:Fourier}. For $K,N\in\N$, one can use Lemma \ref{lem:MC-Fourier} to define a random variable $a_{K,N}$ for which it holds that $\E{\norm{a-a_{K,N}}_{H^d(\T^d)}}\leq C(K^{d-\ell}+K^{2d}N^{-1/2})$. Using a Sobolev embedding theorem, we find that, 
\begin{equation}
     \norm{a-a_{K,N}}_{L^\infty(\T^d)} \leq C \norm{a-a_{K,N}}_{H^d(\T^d)} \leq C(K^{-r}+K^{2d}N^{-1/2}).
\end{equation}
From this follows that $\inf_{x\in \T^d} a_{K,N}(x) \geq \frac{\lambda}{2}>0$ if $K$ and $N$ are suitably chosen, given that $\inf_{x\in \T^d} a(x) \geq \lambda>0$. Combining this with \cite[Lemma 4.7]{LMK1} then gives for the solution $u_{K,N}$ of the Darcy flow with $a_{K,N}:=a_{K,N}(\omega)$ (for a fixed $\omega\in\Omega$) instead of $a$, that,
\begin{equation}
    \norm{u-u_{K,N}}_{L^2(\T^d)} \leq C \norm{a-a_{K,N}}_{L^\infty(\T^d)} \leq  C(K^{-r}+K^{2d}N^{-1/2}).
\end{equation}
It also holds that $\E{\norm{a_{K,N}}_{H^\ell(\T^d)}} \leq \norm{a}_{H^\ell(\T^d)}+CK^{\ell+d}N^{-1/2} \leq \lambda^{-1}$ (from Lemma \ref{lem:MC-Fourier}) and hence by letting $N^{1/2}\geq K^{\ell+d}$ we find $\E{\norm{a_{K,N}}_{H^\ell(\T^d)}}\leq \lambda^{-1}$ where $\lambda$ might have to be redefined (increased). By \cite[Theorem 3.5]{kovachki2021universal}, for any $M\in\N$ there exists a $\Psi$-FNO $\cN_\Psi$ such that
\begin{equation}
     \norm{u_{K,N}-\cN_\Psi(a_{K,N})}_{H^1(\T^d)} \leq CM^{-r},
\end{equation}
where the width of $\cN_\Psi$ grows as $\bigO(M^d)$ and the depth grows as $\bigO(\log(M))$. A $\Psi$-FNO is a discrete realizeation of an FNO \cite[Definition 11]{kovachki2021universal} and maps $\{a_{K,N}(x_j)\}_{j\in \cJ_{2M}}$ to $\{\cN_\Psi(a_{K,N})(x_j)\}_{j\in \cJ_{M}}$, then applies a linear mapping $\cB_M$ to obtain coefficients $\{\beta_m(a_{K,N})\}_{m\in \cK_{M}}$ such that $\cN_\Psi(a_{K,N})(x) = \sum_{j\in \cK_{M}} \beta_m(a_{K,N})\fb_j(x)$. It is important to note that $a_{K,N}(x_j)$ can be written as
\begin{equation}\label{eq:darcy-akn}
    a_{K,N}(\omega)(x_j) =  \frac{\abs{\T^d}}{N}\sum_{n=1}^N\sum_{\abs{k}_\infty\leq K} a(X_n(\omega))\cdot \fb_k(X_n(\omega)) \cdot \fb_k(x_j), 
\end{equation}
which is permutation-invariant with respect to $\{X_n\}_n$. 
Next, Lemma \ref{lem:rec-fourier} guarantees that there exist a tanh neural network with two hidden layers, each with $\bigO(K^d\ln(\delta^{-1})+K^{d+1})$ neurons, such that $\norm{\fb_k-\fbhat_k}_{C^0}\leq \delta$. We then define $\widehat{a_{K,N}}$ by replacing $\fb_k$ by $\fbhat_k$ and the first multiplication in \eqref{eq:darcy-akn} by a (fixed-size) neural network $\widehat{\times}$ that approximates the multiplication operator in $C^0$-norm, which can be done to arbitrary accuracy with a fixed size neural network \cite{deryck2021approximation}. More specifically, this leads to, \begin{equation}\label{eq:darcy-akn-hat}
    \widehat{a_{K,N}}(\omega)(x_j) =  \frac{\abs{\T^d}}{N}\sum_{n=1}^N\sum_{\abs{k}_\infty\leq K} a(X_n(\omega))\widehat{\times} \:\fbhat_k(X_n(\omega)) \cdot \fb_k(x_j), 
\end{equation}
Applying the $\Psi$-FNO $\cN_\Psi$ and linear mapping $\cB_M$ as before then gives rise to the coefficients $\{\beta_m(\widehat{a_{K,N}})\}_{m\in \cK_{M}}$. We then define $\cN$ as,
\begin{equation}
    \cN(\omega)(a)(x) = \sum_{\abs{k}_\infty\leq K} \beta_m(\widehat{a_{K,N}})\fbhat_k(x). 
\end{equation}
By comparing \eqref{eq:darcy-akn} and \eqref{eq:darcy-akn-hat}, using the Lipschitz continuity of $\cN_\Psi$ and the accuracy of $\widehat{\times}$ and $\fb_k$, we find for a fixed $a$ that, 
\begin{equation}
    \norm{\cN(\omega)(a)-\cN_\Psi(\omega)(a_{K,N})}_{L^2(\T^d)} \leq CK^d\delta. 
\end{equation}
By redefining $\delta\leftarrow K^{-r-d}$ and by setting the number of sensors points as $m\leftarrow N$ and the number of branch nets as $p\leftarrow K^d=M^d$, we find the following total error estimate,
\begin{equation}
    \E{\norm{\cN(a)-\cG(a)}_{L^2(\T^d)}} \leq  C(p^{-r/d}+p^{2}m^{-1/2}). 
\end{equation}

Finally, we see that $\cN$ is indeed a Variable-Input Deep Operator Network by comparing with the architecture in Section \ref{sec:2} with $d_{enc} := d+d_v$, $\Psi\approx \Id$, $H:=1$, $\omega := 1$, 
\begin{align}
    \begin{split}
        &\Tilde{\nu}_j((x, a(x))) = \vert \T^d\vert \cdot \sum_{\abs{k}_\infty\leq K} \fb_k(x_j) \cdot a(x)\:\widehat{\times}\: \fbhat_k(x) \quad \text{for every } j\in \cJ_M,
    \end{split}
\end{align}
$\Phi  := \cB_M \circ \cN_\Psi$ and $\tau_k := \fbhat_k$. The width and depth of $\Psi$ and $\omega$ are $\bigO(1)$, furthermore it holds that $\mathrm{depth}(\Tilde{\nu})=3$ ($2$ for $\fbhat_k$ and $1$ for $\widehat{\times}$), $\mathrm{width}(\Tilde{\nu})=\bigO(p\ln(p)+p^{(d+1)/d}) = \bigO(p^{(d+1)/d})$ and $\mathrm{size}(\Tilde{\nu}) = \bigO(p^{(d+2)/d})$ (the latter as a result from Lemma \ref{lem:rec-fourier}). Also, $\mathrm{depth}(\Phi ) = \bigO(\log(p))$ and $\mathrm{width}(\Phi ) = \bigO(p)$ and therefore $\mathrm{size}(\Phi ) = \bigO(p^2\log(p))$. Using \eqref{eq:def-size} we find,
\begin{equation}
    \mathrm{depth}(\branch) = \bigO(\log(p)), \quad \mathrm{width}(\branch) = \bigO(p^{(d+1)/d}) \quad \text{and} \quad \mathrm{size}(\branch) = \bigO(p^3), 
\end{equation}
where we used the upper bound that $p^{(d+2)/d}+p^2\log(p)\lesssim p^3$. 
For the trunk net we find that $\mathrm{depth}(\trunk) = 2$, $\mathrm{width}(\trunk) = \bigO(p^{(d+1)/d})$ and $\mathrm{size}(\trunk) = \bigO(p^{(d+2)/d})$ (as a result from Lemma \ref{lem:rec-fourier}). 

Moreover, if we want to obtain an accuracy of $\epsilon>0$, we need to set $p := \epsilon^{-d/r}$ and $m:=\epsilon^{-2(d+\ell)/r}$. This gives then rise to a total size \eqref{eq:def-size} of $ \mathrm{size}(\cN) = \bigO(\epsilon^{-3d/r})$. 
\end{proof}

\subsection{Proof of Theorem \ref{thm:allen-cahn} (Allen-Cahn)}\label{proof-AC}

\begin{proof}
For $K,N\in\N$, one can use Lemma \ref{lem:MC-Fourier} to define a random variable $u_0^{K,N}$, defined by, 
\begin{equation}
    u_0^{K,N}(\omega)(x) =  \sum_{\abs{k}_\infty\leq K} \chat_{k,N}(\omega) \cdot \fb_k(x) \quad \text{where} \quad  \chat_{k,N}(\omega) = \frac{\abs{\T^d}}{N}\sum_{n=1}^N u_0(X_n(\omega))\cdot \fb_k(X_n(\omega)), 
\end{equation}
for which it holds that $\E{\norm{u_0-u_0^{K,N}}_{H^s(\T^d)}}\leq C(K^{s-\ell}+K^{s+d}N^{-1/2})$ for any $0\leq s\leq \ell$. Using a Sobolev embedding theorem we find (for a fixed realization of $ u_0^{K,N}= u_0^{K,N}(\omega)$ that $\norm{u_0-u_0^{K,N}}_{C^{0}(\T^d)} \leq \norm{u_0-u_0^{K,N}}_{H^{d}(\T^d)}$. Lemma \ref{lem:rec-fourier} guarantees that there exist a tanh neural network with two hidden layers, each with $\bigO(K^d\ln(\delta^{-1})+K^{d+1})$ neurons, such that $\norm{\fb_k-\fbhat_k}_{C^0}\leq \delta$ and in \cite{deryck2021approximation} it is proven that one can approximate the multiplication operator in $C^0$-norm to accuracy $\delta$ with a fixed size neural network $\widehat{\times}$. Using these definitions, we define
\begin{equation}\label{eq:AC-U0}
    U^0_j = \frac{\abs{\T^d}}{N}\sum_{\abs{k}_\infty\leq K} \sum_{n=1}^N u_0(X_n(\omega))\widehat{\times}\: \fbhat_k(X_n(\omega)) \cdot \fb_k(x_j)\quad \text{for } j\in \cJ_M, \: M\in\N,
\end{equation}
for which it holds that, 
\begin{equation}
     \E{\max_j \abs{U^0_j-u_0(x_j)}} \leq CK^d(K^{-\ell}+K^dN^{-1/2}+\delta).
\end{equation}

Next, we will approximate all $u(T, x_j)$ for $j\in \cJ_M$, $M\in\N$. In \cite[Theorem 4.1]{tang2016implicit} a finite difference scheme was proposed that takes $U^0_j \approx u_0(x_j)$ as input and returns an approximation $U^n_j \approx u(T, x_j)$ for $n\in \N$ with $T/n$ being the time step of the scheme. Using the refinement of this result from \cite[Theorem 4.14]{LMK1} we find the error estimate
\begin{equation}\label{eq:AC-step1}
    \E{\max_j \abs{U^n_j-u(T, x_j)}} \leq \exp(CT\norm{u}_{C^{(2,4)}([0,T]\times \T^d)}) \cdot \left(n^{-1} + M^{-2} + \E{\max_j \abs{U^0_j-u_0(x_j)}}\right). 
\end{equation}
Following \cite[Theorem 4.11]{LMK1} we emulate this finite difference scheme to create a neural network approximation $\Uhat^n_j$ of $U^n_j$. Using the results on function approximation by tanh neural networks from \cite{deryck2021approximation} we find that there exists a tanh neural network $\Uhat^M$ of width $\bigO(M^d)$ and depth $\bigO(n)$ that maps $\{U^0_j\}_j$ to $\{\Uhat^n_j\}_j$ for which it holds that, 
\begin{equation}\label{eq:AC-step2}
    \E{\max_j \abs{\Uhat^n_j-u(T, x_j)}} \leq C(n^{-1} + M^{-2} + K^d(K^{-\ell}+K^dN^{-1/2}+\delta)), 
\end{equation}
where we used \eqref{eq:AC-step1}. 

Now define $v^Z$, $Z\in\N$ where $Z$ is a divisor of $M$, as the trigonometric polynomial interpolation of $u(T)$ at the points $\{x_j\}_{j\in\cJ_Z}\subset\{x_j\}_{j\in\cJ_M}$ (see Section \ref{app:Fourier} for more information). One can then define the function $\widehat{v}^Z$ by 
\begin{equation}\label{eq:AC-step3}
    \widehat{v}^Z(x) = \frac{1}{\abs{\cK_Z}} \sum_{k\in \cK_Z} \sum_{j\in \cJ_Z}  a_{k,j} \Uhat^M_j \fb_k(x), 
\end{equation}
which inspires us to define the Variable-Input Deep Operator Network by
\begin{equation}
    \cN(u_0)(x) = \frac{1}{\abs{\cK_Z}} \sum_{k\in \cK_Z} \sum_{j\in \cJ_Z}  a_{k,j} \Uhat^n_j \fbhat_k(x), 
\end{equation}
where $\{\fbhat_k\}_k$ is a tanh neural network with two hidden layers, each with width $\bigO(Z^d\ln(\epsilon^{-1})+Z^{d+1})$ and size $\bigO(Z^d\ln(\epsilon^{-1})^2+Z^{d+2})$, such that $\norm{\fb_k-\fbhat_k}_{C^0}\leq \epsilon$ (Lemma \ref{lem:rec-fourier}). 
By comparing \eqref{eq:AC-step3} with \eqref{eq:acc-trig-pol} and combining this with \eqref{eq:AC-step2} we find that, 
\begin{equation}
\begin{split}
    \E{\norm{\cG(u_0)-\cN(u_0)}_{L^2(\T^d)}}&\lesssim 
    C (Z^{-\ell}+Z^d(n^{-1} + M^{-2} + K^d(K^{-\ell}+K^dN^{-1/2}+\delta)+\epsilon)).
\end{split}
\end{equation}

The error estimate from the statement then follows by redefining $n \leftarrow Z^{d+\ell}$, $M\leftarrow Z^{(d+\ell)/2}$, $K\leftarrow Z^{(\ell+d)/(\ell-d)}$, $\delta \leftarrow K^{-\ell}$, $\epsilon\leftarrow Z^{-d-\ell}$ and by setting the number of sensors points as $m\leftarrow N$ and the number of branch nets as $p\leftarrow Z^d$. Indeed, we find that, 
\begin{equation}
    \E{\norm{\cG(u_0)-\cN(u_0)}_{L^2(\T^d)}}\leq C(p^{-\ell/d}+p^{3(\ell+d)/(\ell-d)}m^{-1/2})
\end{equation}

Finally, we see that $\cN$ is indeed a Variable-Input Deep Operator Network by comparing with the architecture in Section \ref{sec:2} with $d_{enc} := d+d_v$, $\Psi\approx \Id$, $H:=(M/Z)^d$, $\omega := 1$, 
\begin{align}
    \begin{split}
        &\Tilde{\nu}^{(h)}((x, u_0(x))) = \vert \T^d\vert \cdot \sum_{\abs{k}_\infty\leq K} \alpha_k^{(h)} \cdot u_0(x)\widehat{\times}\: \fbhat_k(x) , \qquad 1\leq h\leq H,
    \end{split}
\end{align}
where the $\alpha_k^{(h)}$ are defined in such a way that the output of $\nu$ is equal to $\{U^j_0\}_j$ as in \eqref{eq:AC-U0}. Finally we set
$\Phi  := \Uhat^M$ and $\tau_k := \fbhat_k$. The width and depth of $\Psi$ and $\omega$ are $\bigO(1)$. It also holds that $\mathrm{depth}(\Tilde{\nu})=3$ ($2$ for $\fbhat_k$ and $1$ for $\widehat{\times}$), 
\begin{equation}
\begin{split}
    \sum_{h=1}^H \mathrm{width}(\Tilde{\nu}^{(h)}) = \bigO(H\cdot Z^{d+1}) = \bigO(M^dZ) = \bigO(Z^{d(d+\ell)/2+1}) = \bigO(p^{(d+\ell)/2+1/d}),\\
    \sum_{h=1}^H \mathrm{size}(\Tilde{\nu}^{(h)}) = \bigO(H\cdot 3\cdot Z^{d+2}) = \bigO(M^dZ^2) = \bigO(Z^{d(d+\ell)/2+2}) = \bigO(p^{(d+\ell)/2+2/d}).
\end{split}
\end{equation}
Furthermore we find that $\mathrm{depth}(\Phi ) = \bigO(n) = \bigO(p^{1+\ell/d})$ and $\mathrm{width}(\Phi )=\bigO(M^d) = \bigO(p^{(d+\ell)/2})$. Using \eqref{eq:def-size} we find,
\begin{equation}
    \mathrm{depth}(\branch) = \bigO(p^{1+\ell/d}), \quad \mathrm{width}(\branch) = \bigO(p^{(d+\ell)/2+1/d})\quad \text{and} \quad \mathrm{size}(\branch) = \bigO(p^{(d+\ell)/2+2/d}).
\end{equation}
For the trunk net we find that $\mathrm{depth}(\trunk) = 2$, $\mathrm{width}(\trunk) = \bigO(p^{(d+1)/d})$ and $\mathrm{size}(\trunk) = \bigO(p^{(d+2)/d})$. 

Moreover, if we want to obtain an accuracy of $\epsilon>0$, we need to set $p := \epsilon^{-d/\ell}$ and $m:=\epsilon^{-2-6d(\ell+d)/\ell(\ell-d)}$. This gives then rise to a total size \eqref{eq:def-size} of $\mathrm{size}(\cN) = \bigO(\epsilon^{-d(d+\ell)/2\ell - 2\ell})$. 

\end{proof}

\subsection{Proof of Theorem \ref{thm:navier-stokes} (Navier-Stokes)}\label{proof-NS}

\begin{definition}\label{def:V-NS}
Let $T>0$, $\nu \ge 0$, $d\geq 2$, $r\ge d/2 +2$. We define $\cV \subset C([0,T]; H^r) \cap C^1([0,T];H^{r-2})$ as the set of solutions of the Navier-Stokes equations \eqref{eq:NS}, such that $\sup_{u\in \cV} \Vert u \Vert_{L^2}<\infty$, and 
\begin{equation}
    \sup_{u\in \cV} \left\{
\Vert u \Vert_{C_t(H^r_x)} + \Vert u \Vert_{C^1_t(H^{r-2}_x)}
\right\}
< \infty.
\end{equation}
\end{definition}

\begin{proof}[Proof of Theorem \ref{thm:navier-stokes}]
The proof is rather similar to that of Theorem \ref{thm:darcy}. To avoid too many indices, we define $a:=u_0$. For $K,N\in\N$, one can use Lemma \ref{lem:MC-Fourier} to define a random variable $a_{K,N}$ for which it holds that $\E{\norm{a-a_{K,N}}_{H^s(\T^d)}}\leq C(K^{s-r}+K^{s+d}N^{-1/2})$ for any $0\leq s\leq r$. By \cite[Theorem 3.12]{kovachki2021universal}, for any $M\in\N$ there exits a $\Psi$-FNO $\cN^*_\Psi$ such that
\begin{equation}
     \norm{u-\cN^*_\Psi(a)}_{L^2(\T^d)} \leq CM^{-r},
\end{equation}
where the width of $\cN^*_\Psi$ grows as $\bigO(M^d)$ and the depth grows as $\bigO(\log(M))$. 
In a similar way, we can find a $\Psi$-FNO $\cN_\Psi$ for which $\norm{u-\cN^*_\Psi(a_{K,N})}_{L^2(\T^d)}$ is small. This can be done by making the small adaptation in the proof in \cite{kovachki2021universal} of using $\dot{P}_M \cI_{2M} a_{K,N}$ as input for the $\Psi$-FNO instead of $\dot{P}_M \cI_{2M} a$. 
In \cite[Section F.2.5]{kovachki2021universal}
they use that
\begin{equation}\label{eq:E0old}
    \norm{(1-\dot{P}_M \cI_{2M})a}_{L^2(\T^d)} \leq M^{-r}\norm{a}_{H^r(\T^d)}. 
\end{equation}
If we combine the estimate,
\begin{equation}
    \norm{(1-\dot{P}_M \cI_{2M})a_{K,N}}_{L^2(\T^d)} \leq C M^{-r}\left(\norm{a}_{H^r(\T^d)}+\norm{a-a_{K,N}}_{H^r(\T^d)}\right),
\end{equation}
with the properties of $a_{K,N}$ (i.e. Lemma \ref{lem:MC-Fourier} with $s=0$) then we find that
\begin{equation}
\begin{split}\label{eq:E0new}
    \E{\norm{a-\dot{P}_M \cI_{2M}a_{K,N}}_{L^2(\T^d)}}
    \leq C(K^{-r}+K^{d}N^{-1/2}+M^{-r}(C+K^{r+d}N^{-1/2})).
\end{split}
\end{equation}
By replacing \eqref{eq:E0old} with \eqref{eq:E0new} in \cite[Section F.2.5]{kovachki2021universal}, we find that there exists a $\Psi$-FNO $\cN_\Psi$ such that
\begin{equation}
    \norm{u-\cN^*_\Psi(a_{K,N})}_{L^2(\T^d)} \leq C(K^{-r}+K^{d}N^{-1/2}(1+K^rM^{-r})).
\end{equation}
The proof can be finished in a similar way to the proof of Theorem \ref{thm:darcy}. In particular, we set $m\leftarrow N$, $p\leftarrow K^d$ and $M=K$. The sizes of $\branch$ and $\trunk$ are the same in terms of $p,m$ as in the proof of Theorem \ref{thm:darcy}. However, if we want to obtain an accuracy of $\epsilon>0$, we now need to set $p := \epsilon^{-d/r}$ and $m:=\epsilon^{-2-2d/r}$. This gives then rise to a total size \eqref{eq:def-size} of $\mathrm{size}(\cN) = \bigO(p^{-3d/r})$. 
\end{proof}

\section{Details for Numerical experiments in Section \ref{sec:4}}
\label{app:expdtl}

\begin{figure}[ht!]
\centering
\includegraphics[width=\linewidth]{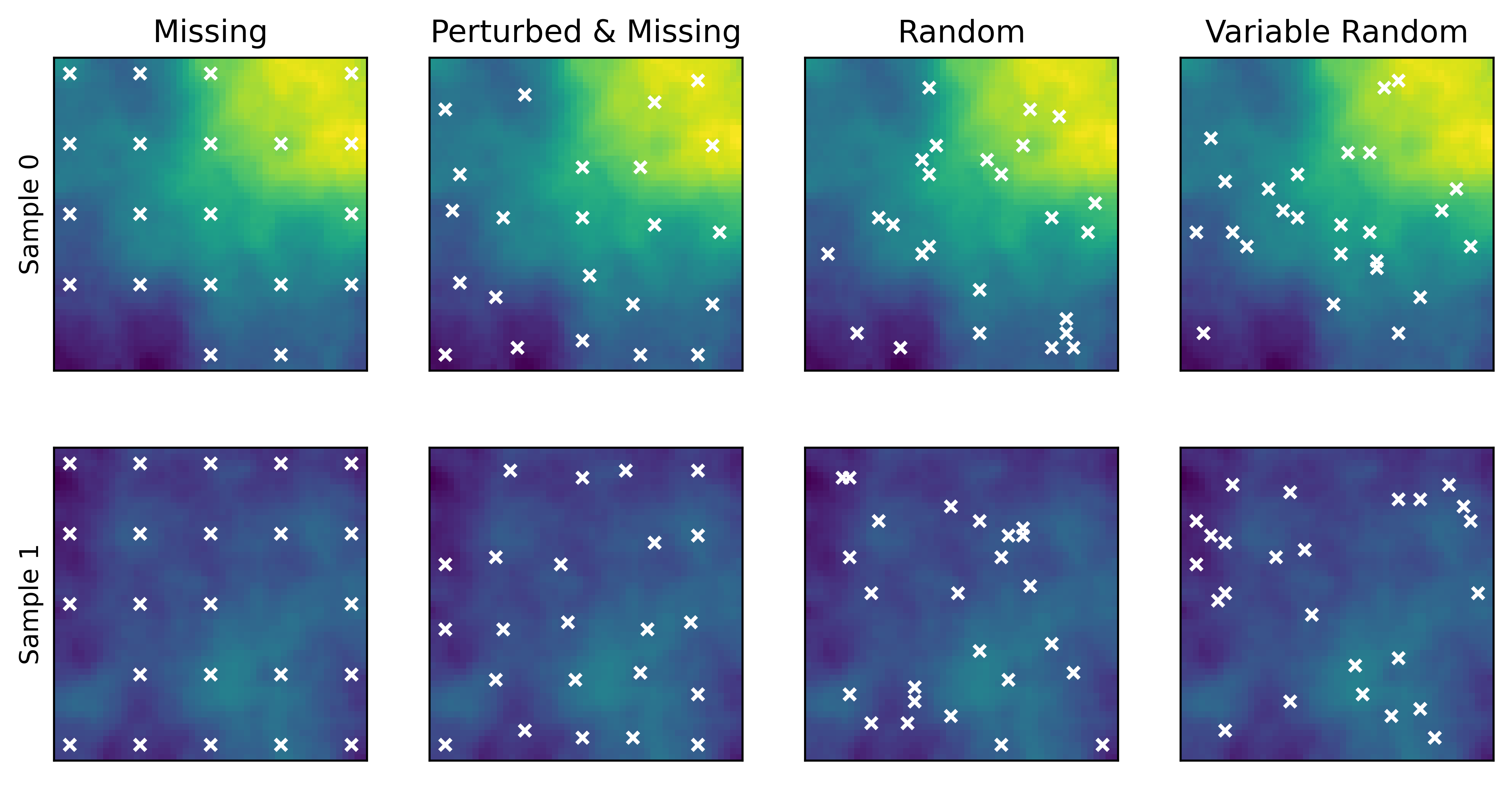}
\caption{Illustration of sensor locations for configuration where sensors are not on a grid. See Section \ref{sec:4} for the nomenclature. We consider two sample inputs for the Darcy flow test case. See columns 2 and 3 in Figure \ref{fig:1} for illustrations of regular and irregular grids respectively and plot the other sensor configurations here. }
\label{fig:SM1}
\end{figure}

\subsection{Training and Architecture Details}
\label{app:td}

For each of the three problems we create a training set containing 1000 samples, a validation set containing 32 samples and a test set containing 5000 samples.
The datasets for each coordinate configuration (regular grid, irregular grid, etc) use the same initial condition, so all models are trained on the same underlying samples.
For training and validation sets the inputs and outputs (where the model is evaluated) use the same sensor coordinates.
For example, when sensor points are drawn at random then the output is only available at the sensor locations which were also used in the input.
This is intended to simulate physical measurements where data will only ever be available at the given sensor locations.
For the test sets the input is given at the respective sensor points but the output is evaluated on a fine grid.
This is done to get an accurate estimate of how well the models learn the true solutions and to see how well the models generalize to arbitrary points in the domain. The corresponding grid sizes for (training and) testing are given in table \ref{tab:output_grid_sizes}.

During the training process, optimization is performed using the ADAM optimizer with the mean squared error loss.
In every epoch the relative $L^2$ error is monitored on the (very small) validation set.
The model that achieves the lowest relative $L^2$ error on the validation set (during the training process) is saved and selected for testing.
We obtain the model hyperparameters described below, by running grid searches over a range of hyperparameter values and selecting the hyperparameter configuration with lowest error on the validation set. Correspondingly, the relative $L^2$ test errors of these  models, with the best performing hyperparameters are reported in tables \ref{tab:darcyerbr}, \ref{tab:ACerbr} and \ref{tab:NSerbr}.


\begin{table}[h]
\caption{The left columns describe the grid resolution on which training (for a the regular grid) is performed. The right columns show the grid resolution on which inference or testing (for all coordinate configurations) is performed. Note, for the Navier-Stokes problem the input to the FNO must be available on a regular $65\times 65$ grid, this is not the case for the DeepONet and VIDON.
}
\label{tab:output_grid_sizes}
\vskip 0.15in
\begin{center}
\begin{tabular}{lcccc}
\toprule
{\em Problem} & \multicolumn{2}{c}{\em Training Grid} & \multicolumn{2}{c}{\em Test Grid} \\
 \cline{2-3} \cline{4-5}
 & space & time & space & time \\
\midrule
{\em Darcy Flow} & $51\times 51$ & - & $51\times 51$ & - \\
{\em Allen-Cahn} & $26\times 26$ & 21 & $76\times 76$ & 41 \\
{\em Navier-Stokes} & $33\times 33$ & - & $65\times 65$ & -  \\
\bottomrule
\end{tabular}
\end{center}
\vskip -0.1in
\end{table}

We normalize the input and output data of the Darcy Flow and Navier-Stokes' problems to lie in the range $[0, 1]$. The data of the Allen-Cahn problem is already in this range and thus needs no additional normalization.

Each training was run on one of the following GPUs: Nvidia GTX 1080, Nvidia GTX 1080 Ti, Nvidia Tesla V100, Nvidia RTX 2080 Ti, Nvidia Titan RTX, Nvidia Quadro RTX 6000, Nvidia Tesla A100.

In the following we describe the detailed results (including error bars) and the training parameters that were used to obtain these results. 
Exact information on the training settings can also be obtained from the settings files in the accompanying code.

\subsection{Detailed Results}

\begin{table}[h]
\caption{The mean $\pm$ standard deviation of relative test errors in $L^2(D)$ for the Darcy flow problem for different configurations of sensors. The symbol "-" implies that the model could not be used for this configuration. 
}
\label{tab:darcyerbr}
\vskip 0.15in
\begin{center}
\begin{tabular}{llccc}
\toprule
{\em Configuration} & $\#$ (Sensors) & \textbf{FNO} &\textbf{DeepONet} & \textbf{VIDON} \\
\midrule
{\em Regular Grid} & $51\times 51$ & $0.76\% \pm 0.02\%$ & $1.48\% \pm 0.01\%$ & $1.29\% \pm 0.02\%$ \\
{\em Irregular Grid} & $51^2=2601$ & - & $1.52\% \pm 0.02\%$ & $1.48\% \pm 0.07\%$ \\
{\em Missing Data} & $[2081,2601]$ &- & - & $1.77\% \pm 0.01\%$ \\
{\em Perturbed Grid}  & [$2341$,$2861$] &- &- & $1.68\% \pm 0.01\%$ \\
{\em Random Locations} & $2601$ &- & - & $2.58\% \pm 0.01\%$ \\
{\em Variable Random Locations} &[2341,2861] & - & - & $2.55\% \pm 0.01\%$ \\
\bottomrule
\end{tabular}
\end{center}
\vskip -0.1in
\end{table}

\begin{table}[h]
\caption{Mean $\pm$ standard deviation of the relative test errors in $L^2(D\times (0,T))$ for the Allen-Cahn PDE for different configurations of sensors. The symbol "-" implies that the model could not be used for this configuration. 
}
\label{tab:ACerbr}
\vskip 0.15in
\begin{center}
\begin{tabular}{llcc}
\toprule
{\em Configuration} & $\#$ (Sensors)  &\textbf{DeepONet} & \textbf{VIDON} \\
\midrule
{\em Regular Grid} & $26\times 26$ & $0.34\% \pm 0.01\%$ & $0.26\% \pm 0.02\%$ \\
{\em Irregular Grid} & $26^2=676$ & $0.34\% \pm 0.01\%$ & $0.27\% \pm 0.01\%$ \\
{\em Missing Data} & $[541,676]$ & - & $0.63\% \pm 0.02\%$ \\
{\em Perturbed Grid}  & [$608$,$744$] &- & $0.83\% \pm 0.02\%$ \\
{\em Random Locations} & $676$ & - & $1.21\% \pm 0.03\%$ \\
{\em Variable Random Locations} &[$608$,$744$]  & - & $1.20\% \pm 0.03\%$ \\
\bottomrule
\end{tabular}
\end{center}
\vskip -0.1in
\end{table}

\begin{table}[h]
\caption{Mean $\pm$ standard deviation of relative test errors in $L^2(D)$ for the Navier-Stokes PDE for different configurations of sensors. The symbol "-" implies that the model could not be used for this configuration. 
}
\label{tab:NSerbr}
\vskip 0.15in
\begin{center}
\begin{tabular}{llccc}
\toprule
{\em Configuration} & $\#$ (Sensors) & \textbf{FNO} &\textbf{DeepONet} & \textbf{VIDON} \\
\midrule
{\em Regular Grid} & $33 \times 33$ & $3.49\% \pm 0.09\%$ & $4.20\% \pm 0.02\%$ & $5.22\% \pm 0.12\%$ \\
{\em Irregular Grid} & $33^2 = 1089$ & - & $4.33\% \pm 0.04\%$ & $5.45\% \pm 0.09\%$ \\
{\em Missing Data} & $[871,1089]$ &- & - & $5.64\% \pm 0.03\%$ \\
{\em Perturbed Grid}  & [$980$, $1198$] &- &- & $5.34\% \pm 0.02\%$ \\
{\em Random Locations} & $1089$ &- & - & $8.35\% \pm 0.03\%$ \\
{\em Variable Random Locations} &[$980$, $1198$] & - & - & $8.28\% \pm 0.03\%$ \\
\bottomrule
\end{tabular}
\end{center}
\vskip -0.1in
\end{table}

\subsection{FNO Training Parameters}
We use the implementation of the FNO model provided by the authors of \cite{FNO} with some slight adjustments to make the code compatible with ours.

The FNO model for both Darcy Flow and the Navier-Stokes equations was trained using 12 modes and width 32. 
The initial learning rate was set to 1e-3 and is halved every 100 epochs.
Weight decay is set to 1e-8 and training finishes after 500 epochs. It is well-known that the training time per epoch for FNO can be significantly higher than that of DeepONet, however, FNO trains much faster i.e., with significantly fewer epochs \cite{FNO,fair}.
\subsection{DeepONet Training Parameters}

Table \ref{tab:deeponet_model_sizes} shows the model sizes used for each problem and for each of the two applicable coordinate configurations (Regular and irregular grids).
Note, we use our own implementation of the DeepONet.
We run all trainings for a maximum of 100,000 epochs.
Table \ref{tab:deeponet_training_params} shows additional training parameters.

\begin{table}[h]
\caption{Model sizes of the DeepONet used on each of the problems and for each of the two coordinate configurations (regular and irregular grids).
The notation [20, 20, 20] refers to three layers with 20 neurons each.
}
\label{tab:deeponet_model_sizes}
\vskip 0.15in
\begin{center}
\begin{tabular}{lccc}
\toprule
{\em Problem} & p & Branch Net & Trunk Net \\
\midrule
{\em Darcy Flow} & 100 & [250, 250, 250, 250] & [250, 250, 250, 250] \\
{\em Allen-Cahn} & 400 & [400, 400, 400, 400] & [500, 500, 500, 500] \\
{\em Navier-Stokes} & 100 & [250, 250, 250, 250] & [250, 250, 250, 250]  \\
\bottomrule
\end{tabular}
\end{center}
\vskip -0.1in
\end{table}

\begin{table}[h]
\caption{Training parameters of the DeepONets used on each of the the problems and for each of the two coordinate configurations (regular and irregular grids).
The third column indicates the epochs at which the learning rate is halved.
}
\label{tab:deeponet_training_params}
\vskip 0.15in
\begin{center}
\begin{tabular}{lccc}
\toprule
{\em Problem} & Initial Learning Rate & Halved at Epochs & Weight Decay \\
\midrule
{\em Darcy Flow} & 1e-4 & 5k, 30k, 60k, 90k & 1e-9 \\
{\em Allen-Cahn} & 1e-4 & 20k, 40k, 60k, 80k & 1e-9 \\
{\em Navier-Stokes} & 2e-4 & 5k, 30k, 60k, 90k & 1e-7  \\
\bottomrule
\end{tabular}
\end{center}
\vskip -0.1in
\end{table}

\subsection{VIDON Training Parameters}
The following parameters are used in all trainings for all problems and coordinate configurations.
The number of heads $H$ in VIDON \eqref{eq:tb-deeponet} was set to 4 for each configuration.
The coordinate and sensor encodings in all trainings is set to four layers with 40 neurons each. The MLPs computing weights and values in VIDON \eqref{eq:tb-deeponet} have four layers of 128 neurons each. The MLP which combines the concatenated output of each of the heads has four layers with 256 neurons each. The remaining model parameters are presented in Table \ref{tab:vidon_model_sizes} and 
additional training parameters are shown in Table \ref{tab:vidon_training_params}.
Moreover, the runtime between different sensor configurations, reported in Table \ref{tab:NSerbr} varies significantly because in the first three, the trunk net has to be evaluated at significantly fewer points in the output domain than in the remaining configurations (where a large number of different, randomly chosen points has to be evaluated). 

\begin{table}[h]
\caption{Model sizes used on each of the problems and for both of the two coordinate configurations (regular and irregular grids).
The notation [20, 20, 20] refers to three layers with 20 neurons each.
}
\label{tab:vidon_model_sizes}
\vskip 0.15in
\begin{center}
\begin{tabular}{lccc}
\toprule
{\em Problem} & $p$ & Output Neurons per Head & Trunk Net \\
\midrule
{\em Darcy Flow} & 100 & 64 & [250, 250, 250, 250] \\
{\em Allen-Cahn} & 400 & 64 & [500, 500, 500, 500] \\
{\em Navier-Stokes} & 100 & 32 & [250, 250, 250, 250]  \\
\bottomrule
\end{tabular}
\end{center}
\vskip -0.1in
\end{table}

\begin{table}[h]
\caption{Training parameters of VIDON used on each of the problems and for both of the two coordinate configurations (regular and irregular grids).
}
\label{tab:vidon_training_params}
\vskip 0.15in
\begin{center}
\begin{tabular}{lccc}
\toprule
{\em Problem} & Initial Learning Rate & Halved at Epochs & Weight Decay \\
\midrule
{\em Darcy Flow} & & &  \\
\hspace{1mm} Default & 1e-4 & 20k, 40k, 60k, 80k & 1e-9 \\
\hspace{1mm} Random / Variable Random Locations & 1e-4 & 20k, 40k, 60k, 80k & 1e-8 \\
{\em Allen-Cahn} & & &  \\
\hspace{1mm} Default & 1e-4 & 20k, 40k, 60k, 80k & 1e-9 \\
\hspace{1mm} Missing, Variable Random Locations & 2e-4 & 20k, 40k, 60k, 80k & 1e-9 \\
{\em Navier-Stokes} & & &   \\
\hspace{1mm} Default & 1e-4 & 10k, 20k, 40k, 60k, 80k & 1e-7  \\
\hspace{1mm} Random Locations & 2e-4 & 10k, 20k, 40k, 60k, 80k & 1.5e-7 \\
\hspace{1mm} Variable Random Locations & 2e-4 & 10k, 20k, 40k, 60k, 80k & 2e-7  \\
\bottomrule
\end{tabular}
\end{center}
\vskip -0.1in
\end{table}

\subsection{Additional Figures}

\begin{figure}[ht!]
\centering
\begin{subfigure}[b]{0.45\textwidth}
\includegraphics[width=\linewidth]{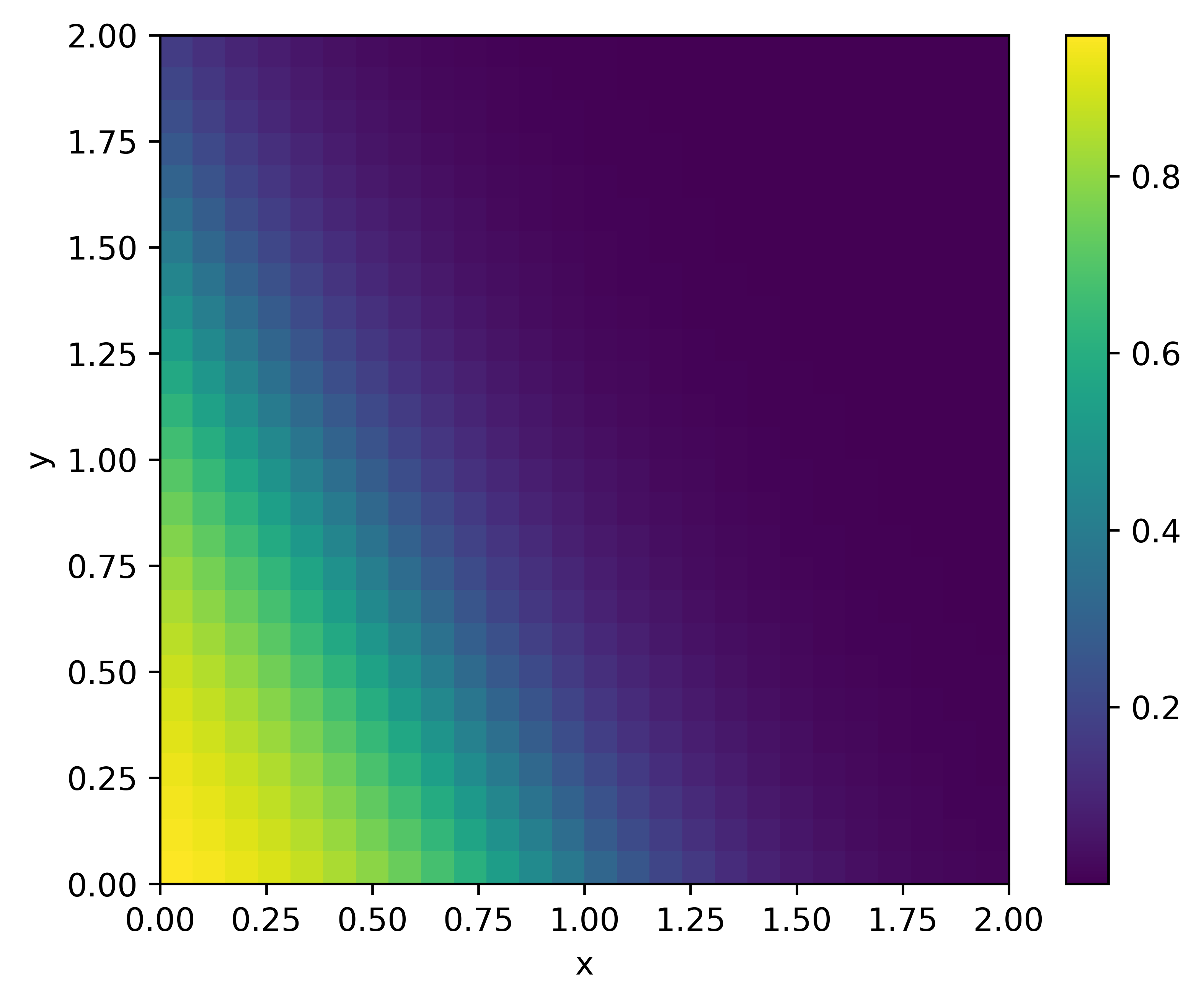}
\caption{Input}
\end{subfigure}
\begin{subfigure}[b]{0.45\textwidth}
\includegraphics[width=\linewidth]{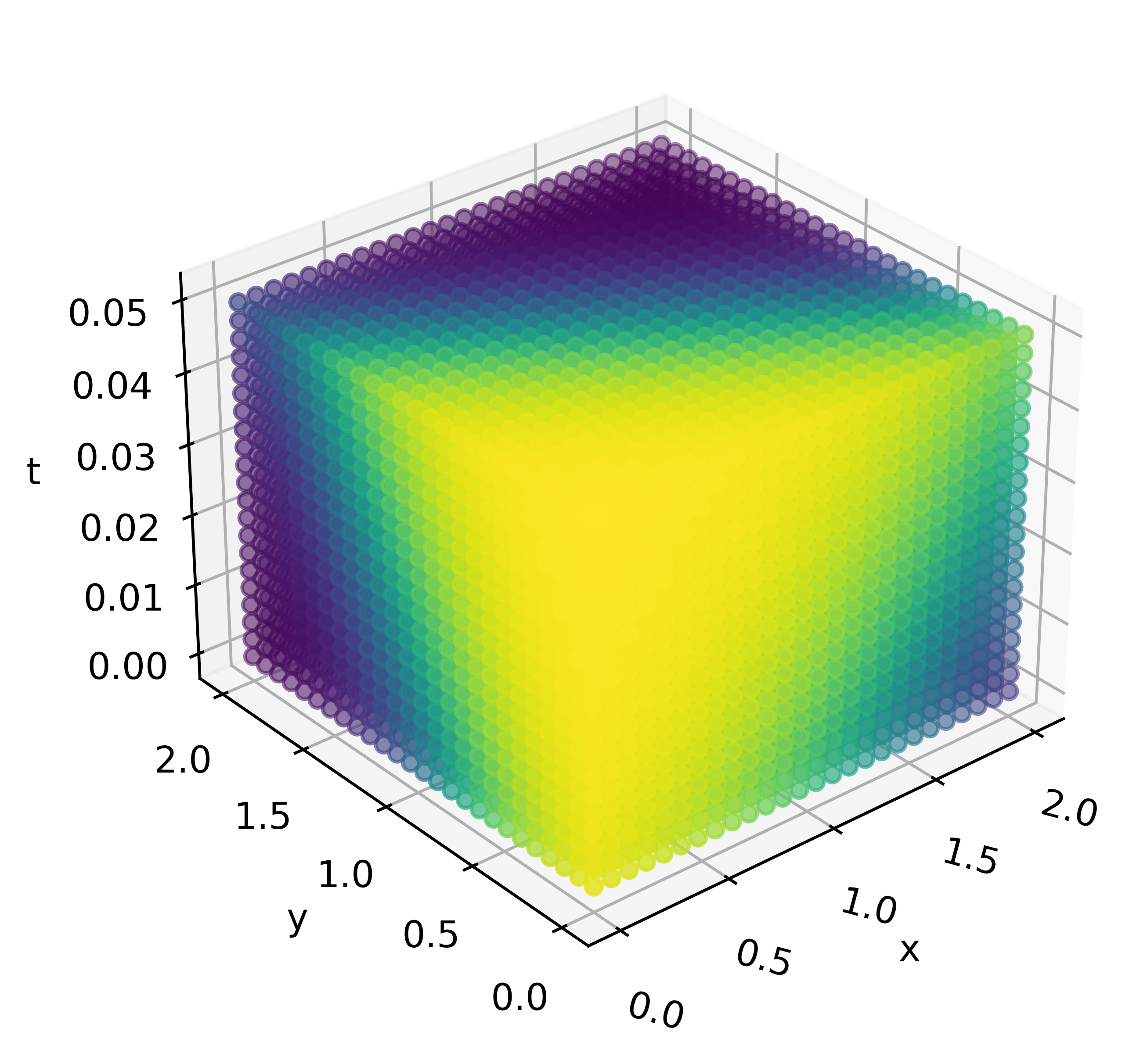}
\caption{Output}
\end{subfigure}
\caption{A sample illustrating the solution operator for the Allen-Cahn equation \eqref{eq:AC}. The input  is given by the initial conditions and the output is given by the time-history (up to time $T=0.05$) of the rotated travelling-wave solution \eqref{eq:AC_IC} of the PDE.}
\label{fig:acex}
\end{figure}

\begin{figure}[ht!]
\centering
\begin{subfigure}{0.45\textwidth}
\includegraphics[width=\linewidth]{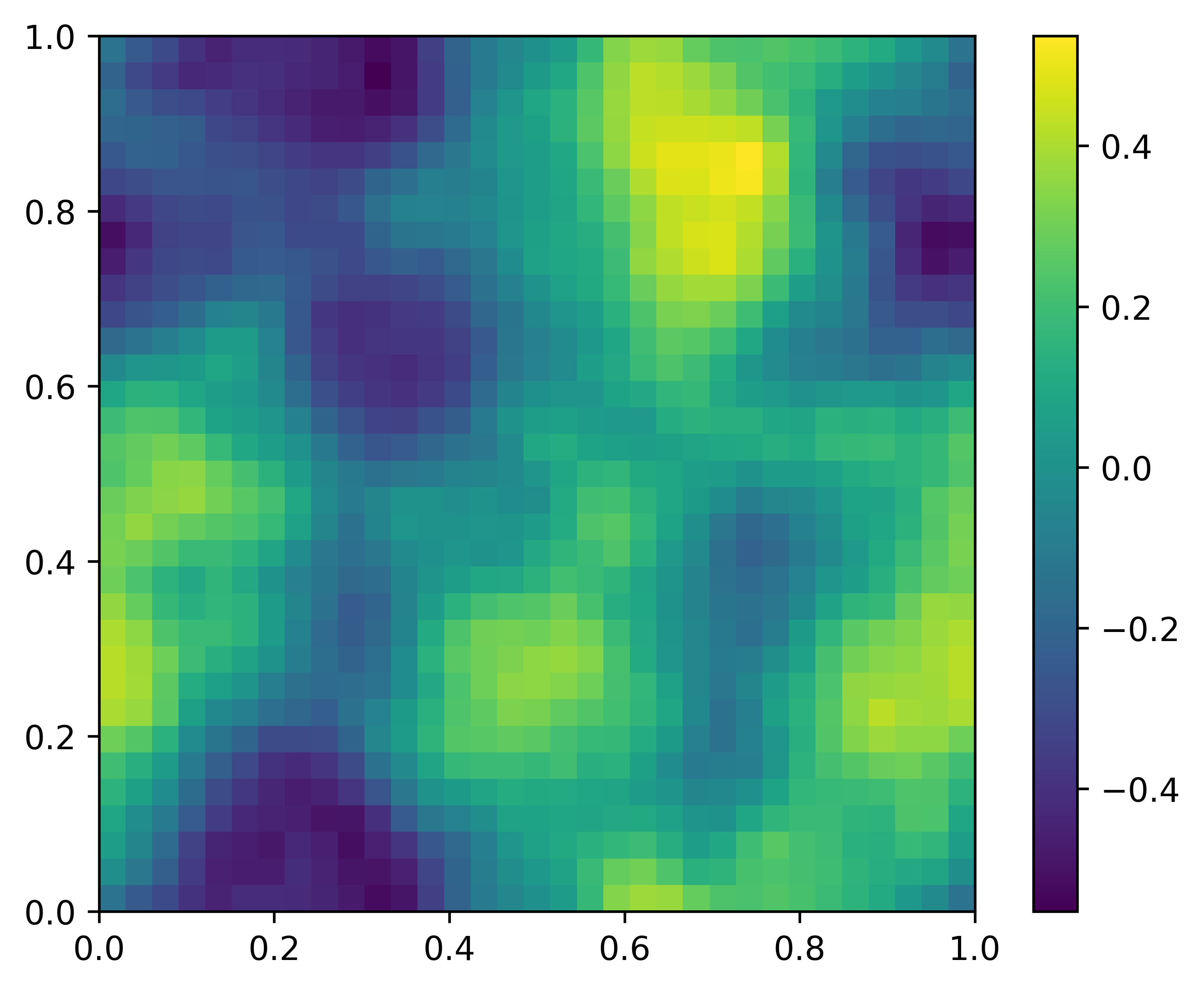}
\caption{Input}
\end{subfigure}
\begin{subfigure}{0.45\textwidth}
\includegraphics[width=\linewidth]{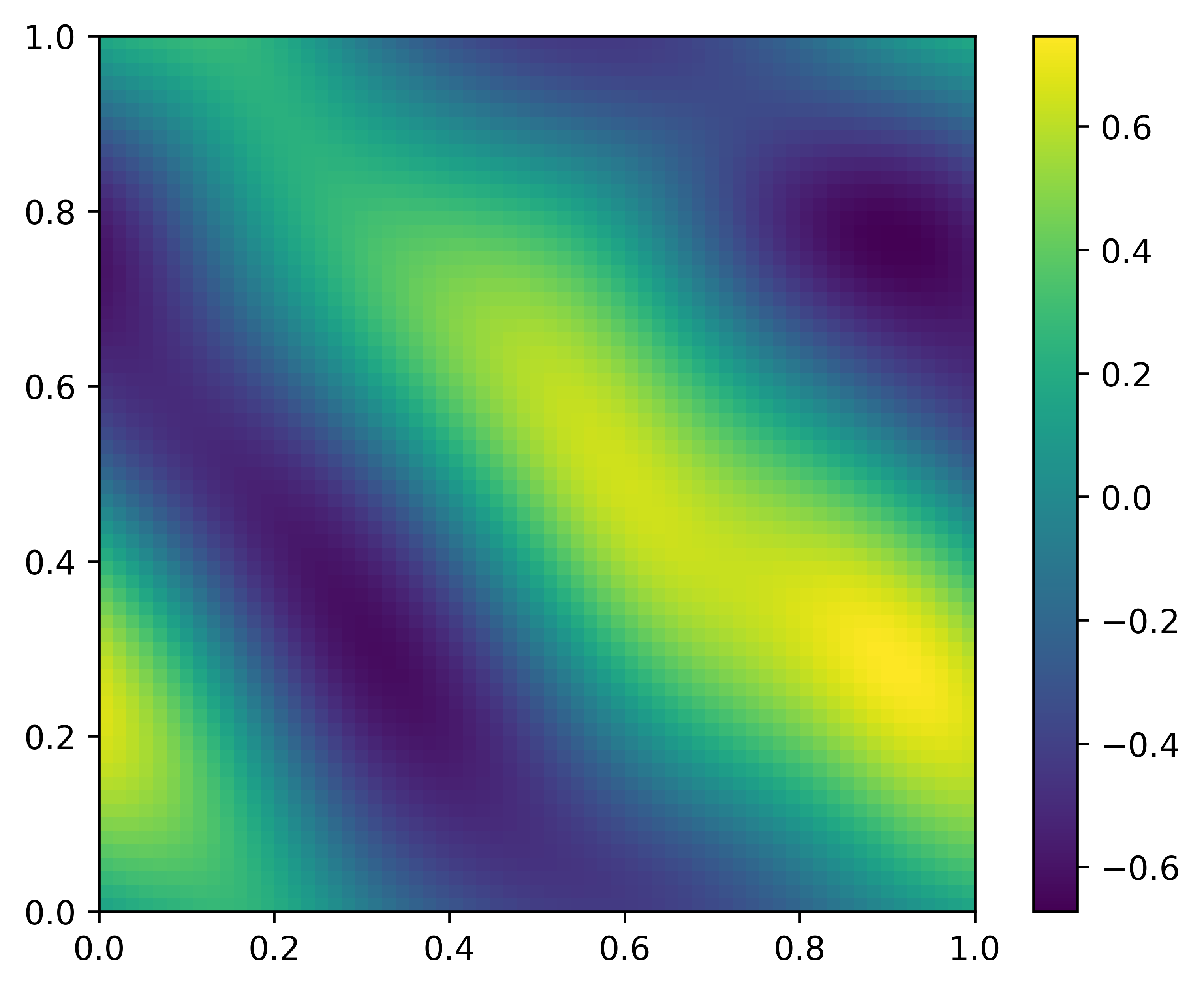}
\caption{Output}
\end{subfigure}
\caption{A sample illustrating the operator for the Navier-Stokes equations \eqref{eq:NS}. The input to the operator is given by the initial vorticity and the output is the vorticity at time $T=5$.}
\label{fig:nsex}
\end{figure}

\end{document}